\documentclass[twoside]{article}

\usepackage{microtype}
\usepackage{graphicx}
\usepackage{amssymb, amssymb, amsthm}
\usepackage{amsmath}
\usepackage{color}
\usepackage{booktabs}
\usepackage{enumerate}
\usepackage{lscape}
\usepackage{longtable}
\usepackage{mathrsfs}
\usepackage{rotating}
\usepackage{subfigure}
\usepackage{tcolorbox}
\usepackage{thmtools}
\usepackage{thm-restate}
\usepackage{hyperref}
\usepackage{algorithm}
\usepackage{algorithmic}
\usepackage{enumitem}
\usepackage{selectp}

%
\usepackage[accepted]{aistats2022}
%


\setlength{\pdfpageheight}{11in}
\setlength{\pdfpagewidth}{8.5in}

\usepackage[round]{natbib}
\bibliographystyle{abbrvnat}
\setcitestyle{authoryear,open={(},close={)}}





\def\Um{{\bf U}}


\newcommand{\D}{{\mathcal D}}
\newcommand{\E}{{\mathbb E}}

\newcommand{\N}{{\mathcal N}}
\newcommand{\R}{{\mathbb{R}}}
\renewcommand{\S}{{\mathcal S}}

\newtheorem{theorem}{Theorem}
\newtheorem{lemma}[theorem]{Lemma}

\newtheorem{assumption}{Assumption}
\newtheorem{definition}{Definition}
\newtheorem{remark}{Remark}


\def\real{\mathbb{R}}


\DeclareMathOperator{\es}{ES}
\DeclareMathOperator{\trial}{trial}

\DeclareMathOperator{\Hc}{H}
\DeclareMathOperator{\cl}{CL}
\DeclareMathOperator{\argmax}{argmax}

\newtcolorbox{mybox}[2][]
{
  colframe = #2!25,
  colback  = #2!25!white!25,
  left=1mm,
  top=1mm,
  #1
}

\begin{document}

%

%

\twocolumn[

\aistatstitle{A Globally Convergent Evolutionary Strategy for Stochastic Constrained Optimization \\ with Applications to Reinforcement Learning}

\aistatsauthor{ Youssef Diouane$^*$ \And Aurelien Lucchi$^*$ \And Vihang Patil$^*$}

\aistatsaddress{Department of Mathematics \\ and Industrial Engineering \\ Polytechnique Montr\'eal \\ \texttt{\small youssef.diouane@polymtl.ca} \And Department of Mathematics \\ and Computer Science \\ University of Basel  \\ \texttt{\small aurelien.lucchi@unibas.ch} \And Institute for Machine Learning \\ Johannes Kepler University Linz   \\ \texttt{\small patil@ml.jku.at}} ]

\begin{abstract}
Evolutionary strategies have recently been shown to achieve competing levels of performance for complex optimization problems in reinforcement learning. In such problems, one often needs to optimize an objective function subject to a set of constraints, including for instance constraints on the entropy of a policy or to restrict the possible set of actions or states accessible to an agent. Convergence guarantees for evolutionary strategies to optimize \emph{stochastic} constrained problems are however lacking in the literature. In this work, we address this problem by designing a novel optimization algorithm with a sufficient decrease mechanism that ensures convergence and that is based only on estimates of the functions. We demonstrate the applicability of this algorithm on two types of experiments: i) a control task for maximizing rewards and ii) maximizing rewards subject to a non-relaxable set of constraints.
\end{abstract}


\section{INTRODUCTION}
\vspace{-1mm}

Gradient-based optimization methods are pervasive in many areas of machine learning. This includes deep reinforcement learning (RL) which is notoriously known to be a challenging task due to the size of the search space as well as the problem of delayed rewards.  The optimization landscape is also known to have lots of irregularities, where gradients can be extremely small in magnitude~\citep{agarwal2019optimality}, which can severely hinder the progress of gradient-based methods. In order to overcome such difficulties, one needs to be able to efficiently explore the search space of parameters, which partially explains the recent success of a class of global optimization methods known as evolutionary strategies (ES) in reinforcement learning~\citep{maheswaranathan2018guided}. These methods belong to the class of randomized search that directly search the space of parameters without having to explicitly compute any derivative. Starting from an initial parameter vector $x_0$, the algorithm samples a set of offsprings near $x_0$. Based on the objective function values, the best offsprings are selected to update the parameter $x_0$. Multiple variants of ES methods have been proposed in the literature, including for instance the covariance matrix adaptation evolution strategy (CMA-ES)~\citep{hansen2001completely} as well as natural evolutionary strategies~\citep{wierstra2008natural}.


Given the recent attention given to evolutionary strategies in reinforcement learning, the question of global convergence of these methods seems of both theoretical and practical interest. By global convergence, we mean convergence to a first-order stationary point independently of the starting point.
One approach for proving global convergence is to modify the traditional ES algorithm by accepting new iterates based on a forcing function that requires a sufficient amount of decrease at each step of the optimization process \citep{YDiouane_SGratton_LNVicente_2015_a}. A similar paradigm can be adapted to constrained problems \citep{YDiouane_SGratton_LNVicente_2015_b, YDiouane_2021}. More recently, for a simple instance of ES where recombination is not considered, \cite{Glasmachers:2020} showed a form of the global convergence can be achieved without imposing a sufficient decrease conditions on the population. The guarantees provided by these methods are however not applicable to typical practical problems in machine learning where the objective function (and potentially the constraints) can not be evaluated exactly, either for computational reasons, or because of the existence of inherent noise. We address this problem taking as a core motivation the problem of reinforcement learning where an agent learns to act by a process of trial and error which over time allows it to improve its performance at a given task. While early work in reinforcement learning allowed the agent to freely explore actions, more recent work, e.g.~\cite{achiam2017constrained}, has advocated for the use of constrained policies. As pointed out in~\cite{achiam2017constrained}, this is critical in certain environments such as robot automation for industrial or medical applications. Another typical example of constraints that are commonly found in RL are for maximizing the entropy of a policy \citep{haarnoja2018soft}. Without such constraints, one might converge to a local solution that is far away from any global optimum. More examples of constrained problems for safety purposes can be found in~\citep{raybenchmarking}. Motivated by RL applications, the proposed approach in this paper extends the works \citep{YDiouane_SGratton_LNVicente_2015_a,YDiouane_SGratton_LNVicente_2015_b} to the setting where only stochastic estimates of the objective and the constraints  are available.

Our main goal is to design a variant of an evolutionary strategy with provable convergence guarantees in a constrained and stochastic setting. Broadly, the problem we consider can be cast as the following general stochastic optimization problem:
\begin{equation}
\label{c:Prob1}
\begin{split}
\min_x \quad & f(x) ~~~~~\mbox{s.t.} \quad   x \in \Omega,
\end{split}
\end{equation}
where the objective function $f$ is assumed  to be continuously differentiable.
The feasible region $\Omega \subset \real^n $ will be assumed, in the context of this paper, to be of the form:
\begin{equation}
\Omega = \left\{ x \in \real^n |  \forall i \in \{1,\dots, r\} , c_i(x) \leq 0 \right\},
\label{eq:Omega}
\end{equation}
where each $c_i: \Omega \to \R$ is a given constraint function. We note that both linear and non-linear constraints can adequately been incorporated in $\Omega$.

In this paper, our main contributions are as follows:
\vspace{-2mm}
\begin{itemize}[leftmargin=2em]
\itemsep0em 
\item The design of a variant of an evolutionary strategy with provable convergence guarantees. While prior work in reinforcement learning such as~\cite{salimans2017evolution, asebo} has demonstrated the good empirical performance of evolutionary strategies, it does not provide convergence guarantees.
\item The theoretical guarantees we derive apply to unconstrained and constrained stochastic problems. While convergence guarantees for ES exist for optimization problems with **exact** function values (for the objective and the constraints), we are not aware of any prior work that handles stochastic problems where only estimates of the objective and the constraints are available.
\item We test the empirical performance of our approach on a variety of standard RL problems and observe higher returns compared to common baselines. Importantly, our algorithm guarantees the feasibility of the constraints, which might be extremely important in some environments.
\end{itemize}


\vspace{-2mm}
\section{RELATED WORK}

\vspace{-2mm}
\paragraph{Evolutionary strategies in RL}

Classical techniques to solve RL problems include methods that use trajectory information such as policy gradients or Q-learning \citep{sutton2018}. One alternative to these techniques is to use black-box optimization methods such as random search techniques. There has recently been a renewed interest in such methods, especially in the context of deep RL problems where they have been shown to be scalable to large problems ~\citep{mania2018simple, salimans2017evolution, asebo}. For instance, ~\cite{salimans2017evolution} showed that ES can be scaled up using distributed systems while~\cite{maheswaranathan2018guided} suggested to use surrogate gradients to guide the random search in high-dimensional spaces. Another advantage ES methods have is that they are not affected by delayed rewards \citep{arjonamedina2019rudder, patil_2020}. Because evolutionary methods learn from complete episodes, they tend to be less sample efficient than classical deep RL methods. This problem has been addressed in prior work, including e.g. \cite{pourchot2018} who suggested an approach named CEM-RL that combines an off-policy deep RL algorithm with a type of evolutionary search named Cross Entropy Method (CEM). The combination of these  approaches makes CEM-RL able to trade-off between sample efficiency and scalability. \cite{khadkaandtumer2018} proposed another sample efficient hybrid algorithm where they utilize gradient information by adding an agent trained using off-policy RL into the evolving population at some fixed interval.  \cite{TRES2019} improve the sample efficiency of ES by using a trust region approach that optimizes a surrogate loss, enabling to reuse data sample for multiple epochs of updates. \cite{conti2017improving} improve the exploration qualities of ES for RL problems by utilizing a population of novelty seeking agents. Further, ES has also been used to evolve policies in model-based RL \citep{ha2018world}. 

\vspace{-2mm}
\paragraph{Constrained optimization in RL} 
~\cite{achiam2017constrained} designed an algorithm to optimize the return while satisfying a given set of deterministic constraints. Their approach relies on a trust-region method which has been shown in~\citep{
salimans2017evolution} to be practically outperformed by evolutionary strategies in various environments. A similar approach was proposed by~\cite{tessler2018reward} but for a larger set of deterministic constraints. Further, \cite{chow2019lyapunovbased} use Lyapunov constraints to obtain feasible solutions on which the policy or the action is projected to guarantee the satisfaction of constraints.
Another application of constrained optimization is to enforce safety rules in an RL environment. For instance, an agent exploring an environment might not want to visit certain states that are deemed unsafe. This problem has been formalized in~\cite{Altman1999ConstrainedMD} which will be discussed in more details in Sec.~\ref{sec:experiments}. In this paper, to handle constraints, we extend the unrelaxable constraints methodology, as in \citep{CAudet_JEDennis_2006}, to include uncertainties in  the estimates of the objective function and the constraints. In particular, in our context, the constraints will be handled using an adjusted extreme barrier function (see Section \ref{sec:method}). 

\vspace{-2mm}
\paragraph{Maximum Entropy RL} 
Entropy maximization in RL has been claimed to connect local regions in the optimization landscape, thereby making it smoother \citep{ahmed2018understanding}, which enables faster learning and also better exploration. Recent prior work include Soft Actor-Critic~\citep{haarnoja2018soft}, Soft Q-learning~\citep{haarnoja2017reinforcement}.

\vspace{2mm}
To the best of our knowledge, none of the works discussed above provided convergence guarantees for an ES algorithm in a stochastic constrained setting. As we will see shortly, this will require a new Lyapunov function that is different from the one used for deterministic methods, e.g.~\cite{YDiouane_SGratton_LNVicente_2015_a}.

\vspace{-1mm}
\section{METHOD}
\label{sec:method}
\vspace{-2mm}

\subsection{The proposed framework}
\vspace{-1mm}

\paragraph{Provably convergent ES}
Evolution strategies iteratively sample candidate solutions from a distribution $\D_k$ (scaled by a factor $\sigma_k^{ES} > 0$) and select the best subset of candidates to create an update direction $d_k$. The next iterate is then given by $x_{k+1}^{\trial} = x_k + \sigma_k d_k$ where $\sigma_k$ is a step-size parameter. A general technique~\citep{YDiouane_SGratton_LNVicente_2015_a} to ensure this approach globally converges is by imposing a sufficient decrease condition on the objective function value, which forces the step size~$\sigma_k$ to converge to zero.
Constrained problems are discussed in~\cite{YDiouane_SGratton_LNVicente_2015_b}, which starts with a feasible iterate $x_0$ and prevents stepping outside the feasible region by means of a barrier approach. In this context, the sufficient decrease condition is applied not to $f$ but to the extreme barrier function $f_{\Omega}$ associated to $f$ with respect to the constraints set $\Omega$~\citep{CAudet_JEDennis_2006} (also known as death penalty function), which is defined by:
\begin{equation}
\label{extreme}
f_{\Omega}(x) \; = \; \left\{\begin{array}{ll}f(x) & \textrm{if } c(x)\le 0 \quad  \\ +\infty & \textrm{otherwise}
\end{array}\right.
\end{equation}
where $c(x)$ is a constraint function as defined in Eq.~\ref{eq:Omega}.
\vspace{-3mm}
\paragraph{Inexact function values and constraints}
In this work, we consider the case where the function values 
cannot be accessed exactly and only some estimates of the objective function and the constraints are available. The definition of the barrier function evaluated at a point $x_k$ is adjusted as follows: 
\begin{equation}
\label{extreme_2}
\tilde f_k \; = \; \left\{\begin{array}{ll}f_k & \textrm{if } c_k -\varepsilon_c \sigma_k \le 0 \\ +\infty & \textrm{otherwise,}
\end{array}\right.
\end{equation}
where $f_k$ and $c_k$ are the estimation of $f$ and $c$ at the point $x_k$, and $\varepsilon_c>0$ a fixed tolerance on the constraints. The obtained method is thus given by Algorithm~\ref{alg:GESgc}.
\begin{algorithm*}[!ht]
\caption{\bf \bf: A class of ES using estimates}
\label{alg:GESgc}
\begin{algorithmic}[1]
\STATE Choose positive integers $\lambda$ and $\lambda'$ such that $\lambda \geq \lambda'$.
 Choose initial step lengths $\sigma_0,\sigma_0^{\es} > 0$ and the constants $\gamma,~d_{\max}$ such that $\gamma \ge 1$ and $d_{\max}>0$. Select two positive constants $\epsilon_c>0$ and $\kappa>0$.
 Select an initial $x_0 \in \real^n$ such that $c_0 \le \epsilon_c \sigma_0$ and evaluate $ f_0 < \infty$ the estimation of $f$ at $x_0$.
\FOR{$k= 0,  1, \ldots$}

\STATE \textbf{Step 1:} compute new sample points $Y_{k+1} = \{ y_{k+1}^1,\ldots,y_{k+1}^{\lambda} \}$
such that, for all $ i=1,\ldots,\lambda$, one has $y_{k+1}^i  =  x_k + \sigma_k^{\es} d_k^i$
where the directions $d_k^i$'s  are drawn from a distribution $\mathcal{D}_k$.
 
\STATE \textbf{Step 2:} compute $\{ f_{k+1}^i \}_{i=1,\ldots,\lambda}$ the estimates of $f$ at the set point  $Y_{k+1}$ and re-order the offspring points of $Y_{k+1}$ into $\tilde Y_{k+1} = \{ \tilde{y}_{k+1}^1,\ldots,\tilde{y}_{k+1}^{\lambda} \}$
by increasing order: $\tilde f_{k+1}^1 \leq \cdots \leq \tilde f_{k+1}^{\lambda}$ where $\tilde f_{k+1}^i$ is the estimation of $f$ at the sample point $\tilde{y}_{k+1}^i$.

\STATE \textbf{Step 3:} select the new parents as the best $\lambda'$ offspring sample points
$\{ \tilde{y}_{k+1}^1,\ldots,\tilde{y}_{k+1}^{\lambda'} \}$ and let $\{ \tilde{d}_{k}^1,\ldots,\tilde{d}_{k}^{\lambda'} \}$ be the associated directions. 
Set $d_k=\Psi_k(\tilde{d}_{k}^1,\ldots,\tilde{d}_{k}^{\lambda'})$ where $\Psi_k$ is a linear mapping related to the chosen ES strategy
 (such that  $\|d_k \| \leq d_{\max}$). Let
\begin{equation}
x_{k+1}^{\trial} \; = \; x_k + \sigma_k d_k
\label{eq:trial_point}
\end{equation}
and $\tilde f^{\trial}_{k+1}$ be the estimate of the barrier function at $x_{k+1}^{\trial}$ using \eqref{extreme_2}.\;

\STATE \textbf{Step 4:}
\IF{$\tilde f^{\trial}_{k+1} \leq f_k - \frac{\kappa}{2}\sigma^2_k$}
\STATE Consider the iteration successful, set $x_{k+1} = x_{k+1}^{\trial}$, $ f_{k+1} = \tilde f^{\trial}_{k+1}$, and
$\sigma_{k+1} = \gamma \sigma_k$.\;
\ELSE
\STATE Consider the iteration unsuccessful, set $x_{k+1} = x_k$, $ f_{k+1} =  f_{k}$ and
$\sigma_{k+1} = \gamma^{-1} \sigma_k$.\;
\ENDIF

\STATE \textbf{Step 5:} update the ES parameters (i.e., $ \sigma_{k+1}^{\es}$ and $\mathcal{D}_{k+1}$).\; 
\ENDFOR
\end{algorithmic}
\end{algorithm*}

\vspace{-2mm}
\paragraph{Algorithm}
The first two steps sample a set of $\lambda > 0$ candidate directions and rank them according to their corresponding function values. In step 3, the algorithm combines the best subset of these directions (of size $\lambda' > 0$) using a linear mapping $\Psi_k$ whose choice depends on the chosen ES strategy. For instance, using a CMA-ES strategy as proposed by \cite{hansen2001completely}, the mapping $\Psi_k$ is a simple averaging function, i.e. $\Psi_k(\tilde{d}_{k}^1,\ldots,\tilde{d}_{k}^{\lambda'})=\sum_{i=1}^{\lambda'} w^i_k \tilde{d}_{k}^i$ where the weights $\{w^i_k\}_i$ belong to a simplex set. Another example of mapping function $\Psi_k$ is the Guided ES \citep{maheswaranathan2018guided}, where one typically has $\lambda=2\lambda'$ and $\Psi_k$ is given by $\Psi_k(\tilde{d}_{k}^1,\ldots,\tilde{d}_{k}^{\lambda'})=\frac{1}{\lambda} \sum_{i=1}^{\lambda'} \left( (f_{k+1}^i -  f_{k+1}^{i+\lambda'})/\sigma_k^{\es} \right) \tilde{d}_{k}^i$. For further details, we refer the reader to the appendix.
The direction computed by $\Psi_k$ is denoted by $d_k$. The algorithm steps in the direction $d_k$ using a step size $\sigma_k$, which is then adjusted in step 4 depending on whether the iteration decreases the function or not. We note that, for generality reasons, the updates of the ES parameters ($\sigma^{\es}_{k+1}$ and $\mathcal{D}_{k+1}$) in step 5 are purposely left unspecified. In fact, our convergence analysis is independent of the choice of the sequences $\{\sigma^{\es}_{k}\}_k$ and $\{\mathcal{D}_{k}\}_k$. For the experimental results reported in Section~\ref{sec:experiments}, we use the same update rule for $\sigma^{\es}_{k}$ as $\sigma_{k}$. 

\begin{remark}[Extreme barrier vs projection]
In some applications, the feasible set is formed with linear constraints or simple bounds. In such cases where a projection to the feasible domain is computationally affordable,  the use of the barrier function given by (\ref{extreme_2}) in Algorithm~\ref{alg:GESgc} can be replaced by a projection. As long as the sufficient decrease condition is enforced, our convergence theory applies. Using exact estimates, \cite{YDiouane_SGratton_LNVicente_2015_b} showed that the analysis for both an extreme barrier approach and a projection approach to handle constraints are equivalent.
\end{remark}
\vspace{-1mm}
\begin{remark}[Analysis unconstrained case (new result)]
Although Algorithm~\ref{alg:GESgc} is presented for constrained problems, the adaptation to the unconstrained case is straightforward. Indeed, it suffices to replace the barrier function estimates Eq.~\eqref{extreme_2}, computed at the offspring points and at the trial point $x_{k+1}^{\trial}$  by estimates of the objective function at the same points. The convergence analysis of the unconstrained framework can be deduced from the analysis we derived in the constrained case. We emphasize that, to the best of our knowledge, the analysis for the stochastic unconstrained case is also a new result in the literature.
\end{remark}

\vspace{-4mm}
\subsection{Accuracy of the estimates}
\vspace{-3mm}
In order to obtain convergence guarantees for Algorithm~\ref{alg:GESgc}, we require the estimates of $f$ to be sufficiently accurate with a suitable probability.
For practical reasons, we are interested in the case where the directions in Algorithm~\ref{alg:GESgc} are not defined deterministically but generated by a random process defined in a probability space $(\mathcal{E}, \mathcal{F}, P)$. Note that the randomness of the direction implies the randomness of the iterate  
$X_k$, the direction $D_k$, the  parameters $\Sigma_k$ and $S_k= \Sigma_k D_k$. Given a sample $\omega \in \mathcal{E}$, we denote by $d_k= D_k(\omega)$, $x_k= X_k(\omega)$,  $\sigma_k= \Sigma_k(\omega)$, and $s_k= S_k(\omega)$ their respective realizations.
Moreover, the objective function $f$ and the constraints are supposed to be accessed only through stochastic estimators. Therefore,  we define the realizations of the random variables $F_k^0$ (i.e., the estimate of the objective function $f$ at the iterate $X_k$) and $F_k^1$ (i.e., the estimate of the objective function $f$ at the iterate $X_k+S_k$) by $f_k^0$ and $f_k^1$ respectively. Similarly, we denote the realizations of the constraints $C_k^0$ (i.e., the estimate of the constraints $c$ at the iterate $X_k$) and $C_k^1$ (i.e., the estimate of the constraints $c$ at the iterate $X_k+S_k$) by $c_k^0$ and $c_k^1$.
As mentioned earlier, we will require the random estimates to have a certain degree of accuracy during the application of the proposed framework. The accuracy of the objective functions estimates is formalized below.

\begin{definition} \label{defi:probaccfun}
Given constants  $\varepsilon>0$, and $p \in (0,1]$, the sequence
of the random quantities $F_k^0$ and $F_k^1$ is  called 
$p$-probabilistically $\varepsilon_f$-accurate, for corresponding 
sequences $\{ X_k\}$, $\{ \Sigma_k\}$, if the event
\begin{align*}
	T^f_k \; \stackrel{\Delta}{=} \;  \big\{ \left|F_k^0 - f(X_k) \right| \; \le \;
	\varepsilon_f \Sigma_k^{2} \\
	~\mathrm{and}~ \left|F_k^1 - f(X_k+S_k) \right| \; \le \;
	\varepsilon_f \Sigma_k^{2} \big \}\end{align*}
satisfies the condition
$\mathbb{P}\left(~T^f_k~~| \mathcal{F}_{k-1} \right) \geq  p
$, where $\mathcal{F}_{k-1}$ is the $\sigma$-algebra generated by 
 $F^0_0,F^1_0 \ldots, F_{k-1}^0,F_{k-1}^1$  and $C^0_0,C^1_0 \ldots, F_{j-1}^0,C_{k-1}^1$.
\end{definition}
In the context of this paper, the estimates of the constraints will be assumed to be almost-surely accurate as $\Sigma_k \to 0$ in the following sense:
\begin{definition} \label{defi:probaccconstr}
Given a constant  $\varepsilon_{c}>0$, the sequence
of the random quantities $C_k^0$ and $C_k^1$ is  called 
almost-surely $\varepsilon_c$-accurate, for corresponding 
sequences $\{ X_k\}$, $\{ \Sigma_k\}$, if the event
\vspace{-2mm}
\begin{align*}
	T^c_k \; \stackrel{\Delta}{=} \;  \big\{ \left\|C_k^0 - c(X_k) \right\|_{\infty} \; \le \;
	\varepsilon_c \Sigma_k \\
	\quad \mathrm{and} 
	\quad \left\|C_k^1 - c(X_k+S_k) \right\|_{\infty} \; \le \;
	\varepsilon_c \Sigma_k \big \}
\end{align*}
satisfies the condition
$ \mathbb{P}\left(~T^c_k~~| \mathcal{F}_{k-1} \right) =  1$, 
where $\mathcal{F}_{k-1}$ is the $\sigma$-algebra generated by 
 $F^0_0,F^1_0 \ldots, F_{k-1}^0,F_{k-1}^1$ and $C^0_0,C^1_0 \ldots, C_{k-1}^0,C_{k-1}^1$.
\end{definition}
In Definition~\ref{defi:probaccfun}, the accuracy of the function estimation gap is of order $\Sigma_k^2$, which is a common assumption in the literature, see e.g.~\cite{BlanchetCartisMenickellyScheinberg19}. For the constraints, our analysis will require only to have estimates that converge to the exact value as $\Sigma_k \to 0$. For simplicity reasons, we make the choice of using only $\Sigma_k$ in Eq.\eqref{extreme_2} and Definition \ref{defi:probaccconstr} to measure the accuracy level of the constraints. That can be generalized to take the  form $\|C^0_k - c(X_k) \|_\infty \to 0$ and $\left\|C_k^1 - c(X_k+S_k) \right\|_{\infty} \to 0$ as $\Sigma_k \to 0$.

\vspace{-2mm}
\subsection{Global convergence}
\vspace{-1mm}

We derive a convergence analysis of Algorithm~\ref{alg:GESgc} under the following  assumptions.
\begin{assumption} \label{ass:f}
$f$ is continuously differentiable on an open set containing the level set $\mathcal{L}(x_0) = \{x \in \R^n \ | \ f(x) \leq f(x_0)\}$, with Lipschitz continuous gradient, of Lipschitz constant $L$. 
\end{assumption}
\begin{assumption} \label{ass:bdn:f}
$f$ is bounded from below by $ f_{\mathrm{low}}$. 
\end{assumption}
\begin{assumption} \label{ass:ProbAccEstimates:obj}

The sequence of random objective function estimates $ \{F_k^0, F_k^1\}_k$ generated by Algorithm~\ref{alg:GESgc} satisfies the two following conditions:

(1) The sequence $ \{F_k^0, F_k^1\}_k$ is $p$-probabilistically $\varepsilon_f$-accurate for some $p \in (\frac{1}{2},1]$, $\varepsilon_f \in  (0,\frac{\kappa}{4})$ where $\kappa$ is a constant used in Algorithm~\ref{alg:GESgc}.

(2) There exists  $\varepsilon_v>0$ such that the sequence of estimates $ \{F_k^0, F_k^1\}_k$ 
satisfies the following $\varepsilon_v$-variance condition for all $k\ge 0$,
\begin{align*}
	\mathbb{E}\left(\left|F_k^0 - f(X_k) \right|^2  | \mathcal{F}_{k-1} \right) \;\le\; \varepsilon_v^2 \Sigma_k^4 \\
	\quad \mathrm{and}
	\quad \mathbb{E}\left(\left|F_k^1 - f(X_k+S_k) \right|^2  | \mathcal{F}_{k-1} \right) \;\le\; \varepsilon_v^2 \Sigma_k^4.
\end{align*}
\end{assumption}

\begin{assumption}
\label{ass:AccEstimates:cons}
For all $k$, the sequence of random constraints estimates $ \{C_k^0, C_k^1\}_k$ generated by Algorithm~\ref{alg:GESgc} is $\varepsilon_c$-accurate almost surely, for a given constant $\varepsilon_c > 0$.
\end{assumption}

\vspace{-2mm}
\paragraph{Existence of a converging subsequence}

For the sake of our analysis, we introduce the following (random) Lyapunov function 
\begin{equation}
\Phi_k \ 	= \ 	\nu f(X_k) + (1 - \nu) \Sigma_k^2,
\label{eq:Lyapunov}
\end{equation}
where $\nu \in (0, 1)$.
Consider a realization of Algorithm~\ref{alg:GESgc}, and let $\phi_k$ be the corresponding realization of $\Phi_k$.
The next theorem shows that, under Assumption \ref{ass:ProbAccEstimates:obj}, the imposed decrease condition, in Algorithm~\ref{alg:GESgc} leads to an expected decrease on the Lyaponov function $\Phi_k$. 

\begin{mybox}{gray}
\begin{restatable}{theorem}{DecreaseOnPhi}
\label{thm:DecreaseOnPhi}
Let Assumption \ref{ass:f}  hold. Suppose that Assumption \ref{ass:ProbAccEstimates:obj} is also satisfied with
probability $p$ such that $
\frac{p}{(1-p)^{1/2}} \ge \frac{4\nu \varepsilon_v}{(1-\nu)(1-\gamma^{-2})}$
and $\nu \in (0, 1)$ satisfies
$\frac{\nu}{1 - \nu} \  \ge  \frac{4(\gamma^2 - 1)}{\kappa}$.

 Then, there exists an $\alpha>0$ such that, for all $k$,
\begin{equation} \label{eq:MainGoal}
	\mathbb{E}\left(\Phi_{k+1} - \Phi_k | \mathcal{F}_{k-1}\right)\leq - \alpha \Sigma_k^2.
\end{equation}
\end{restatable}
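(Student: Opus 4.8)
The plan is to control the one-step expected change of the Lyapunov function $\Phi_k = \nu f(X_k) + (1-\nu)\Sigma_k^2$ by conditioning on whether the current objective estimates are accurate (the event $T^f_k$ of Definition~\ref{defi:probaccfun}) and on whether the iteration is declared successful. First I would record the two deterministic update rules: on a successful step $X_{k+1}=X_k+S_k$ and $\Sigma_{k+1}^2=\gamma^2\Sigma_k^2$, while on an unsuccessful step $X_{k+1}=X_k$ and $\Sigma_{k+1}^2=\gamma^{-2}\Sigma_k^2$. Hence
\begin{equation*}
\Phi_{k+1}-\Phi_k = \nu\,\mathbf{1}_{S}\big(f(X_k+S_k)-f(X_k)\big) + (1-\nu)\big(\Sigma_{k+1}^2-\Sigma_k^2\big),
\end{equation*}
where $\mathbf{1}_S$ is the indicator of a successful iteration. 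Since $X_k$ and $\Sigma_k$ are $\mathcal{F}_{k-1}$-measurable, the step-size term is constant once the success event is fixed, so the whole argument reduces to bounding $f(X_k+S_k)-f(X_k)$ on successful steps in the two accuracy regimes.

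On the accurate event $T^f_k$ I would combine the sufficient decrease test with the accuracy bounds. A successful step obeys $F_k^1 \le F_k^0 - \tfrac{\kappa}{2}\Sigma_k^2$, and $T^f_k$ gives $F_k^0 \le f(X_k)+\varepsilon_f\Sigma_k^2$ and $F_k^1 \ge f(X_k+S_k)-\varepsilon_f\Sigma_k^2$; chaining these yields $f(X_k+S_k)-f(X_k) \le -(\tfrac{\kappa}{2}-2\varepsilon_f)\Sigma_k^2$, a genuine decrease since $\varepsilon_f<\kappa/4$. On an unsuccessful step the objective is unchanged and the step-size term contributes $-(1-\nu)(1-\gamma^{-2})\Sigma_k^2$. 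Using the hypothesis $\tfrac{\nu}{1-\nu}\ge \tfrac{4(\gamma^2-1)}{\kappa}$ to dominate the positive $(1-\nu)(\gamma^2-1)\Sigma_k^2$ produced by a successful step, both sub-cases give $\Phi_{k+1}-\Phi_k \le -c_T\Sigma_k^2$ on $T^f_k$, for a constant $c_T>0$ built from $\nu,\kappa,\varepsilon_f,\gamma$, the relevant quantity being $(1-\nu)(1-\gamma^{-2})$.

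The delicate regime is the inaccurate event $\overline{T^f_k}$, where the estimates can certify a spurious success while the \emph{true} objective increases; this is the main obstacle. The only structural information that survives is the sufficient decrease test $F_k^1-F_k^0\le -\tfrac{\kappa}{2}\Sigma_k^2$, so writing $f(X_k+S_k)-f(X_k)$ as $(f(X_k+S_k)-F_k^1)+(F_k^1-F_k^0)+(F_k^0-f(X_k))$ bounds the increase by the two estimation errors $|F_k^1-f(X_k+S_k)|+|F_k^0-f(X_k)|$. I would then take the conditional expectation of $\mathbf{1}_{\overline{T^f_k}}$ times these errors and apply Cauchy--Schwarz together with the $\varepsilon_v$-variance condition of Assumption~\ref{ass:ProbAccEstimates:obj}(2): for instance $\mathbb{E}[\mathbf{1}_{\overline{T^f_k}}|F_k^0-f(X_k)|\mid\mathcal{F}_{k-1}] \le \mathbb{P}(\overline{T^f_k}\mid\mathcal{F}_{k-1})^{1/2}\,\varepsilon_v\Sigma_k^2 \le (1-p)^{1/2}\varepsilon_v\Sigma_k^2$, using $\mathbb{P}(T^f_k\mid\mathcal{F}_{k-1})\ge p$. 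Discarding the genuinely negative step-size contributions then gives $\mathbb{E}[\mathbf{1}_{\overline{T^f_k}}(\Phi_{k+1}-\Phi_k)\mid\mathcal{F}_{k-1}] \le 2\nu\varepsilon_v(1-p)^{1/2}\Sigma_k^2$.

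Finally I would add the two regimes. Since $\mathbb{P}(T^f_k\mid\mathcal{F}_{k-1})\ge p$ and $c_T>0$, the accurate part contributes at most $-c_T\,p\,\Sigma_k^2$, so
\begin{equation*}
\mathbb{E}[\Phi_{k+1}-\Phi_k\mid\mathcal{F}_{k-1}] \le \big(-c_T\,p + 2\nu\varepsilon_v(1-p)^{1/2}\big)\Sigma_k^2.
\end{equation*}
The second hypothesis $\tfrac{p}{(1-p)^{1/2}}\ge \tfrac{4\nu\varepsilon_v}{(1-\nu)(1-\gamma^{-2})}$ is precisely what forces $c_T\,p > 2\nu\varepsilon_v(1-p)^{1/2}$ (recalling $c_T$ is governed by $(1-\nu)(1-\gamma^{-2})$), so the bracket is a strictly negative constant and I can set $\alpha = c_T\,p - 2\nu\varepsilon_v(1-p)^{1/2}>0$, independent of $k$. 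The requirement $p>\tfrac12$ guarantees that $p/(1-p)^{1/2}$ is large enough for admissible $\nu,\varepsilon_v$ to exist. I expect the bookkeeping of $c_T$, in particular verifying that the successful-accurate coefficient $\nu(\tfrac{\kappa}{2}-2\varepsilon_f)-(1-\nu)(\gamma^2-1)$ stays above the threshold set by $(1-\nu)(1-\gamma^{-2})$, to be the only fiddly part; the probabilistic heart of the argument is the Cauchy--Schwarz/variance bound of the third step.
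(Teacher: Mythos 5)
Your proposal is correct and follows essentially the same route as the paper's proof: the paper likewise splits on the accuracy event $T^f_k$ and on success/failure of the iteration (its Lemmas~\ref{lm:2} and~\ref{lm:3}), bounds the inaccurate-event contribution by $2\nu(1-p_k^*)^{1/2}\varepsilon_v\Sigma_k^2$ via exactly your Cauchy--Schwarz/variance argument (its Lemma~\ref{lm:1}), and combines the two regimes using the hypothesis on $p/(1-p)^{1/2}$ to extract a uniform $\alpha>0$. The constant bookkeeping you flag as the fiddly part is indeed where the paper itself is loose (its Lemma~\ref{lm:2} invokes $\nu/(1-\nu)\ge 8(\gamma^2-1)/\kappa$ rather than the factor $4$ in the theorem statement), but the decomposition, the key estimates, and the conclusion coincide with yours.
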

\end{mybox}

Hence, the true value of the objective function $f$ may not decrease at \emph{each} individual iteration but Theorem \ref{thm:DecreaseOnPhi} ensures that the Lyapunov function decreases over iterations in expectation as far as the accuracy probability of the estimates of $f$ are high enough. Using such result, one can guarantee
that the sequence of step sizes $\{\sigma_k\}$ will converge to zero almost surely. In particular, this will ensure the existence of a subsequence~$\mathcal{K}$ of iterates driving the step size to zero almost surely.
Then, assuming boundedness of the iterates, it will be possible to prove the existence of a convergent subsequence.

\begin{mybox}{gray}
\begin{restatable}{corollary}{StepSizeToZeroAlmostSurely}
\label{corol:StepSizeToZeroAlmostSurely}
Let Assumption \ref{ass:bdn:f} hold. Suppose that the working assumptions of Theorem \ref{thm:DecreaseOnPhi} are also satisfied. 
Then,  the sequence $\{\Sigma_k\}_k$ goes to $0$ almost surely. Moreover, if the sequence of iterates $\{X_k\}$ is bounded, then there exists a  subsequence $\mathcal{K}$ and $X_*$ such that $\{\Sigma_k\}_{k \in \mathcal{K}}$ goes to zero and  $\{X_k\}_{k\in  \mathcal{K}}$ converges almost surely to $X_*$.
\end{restatable}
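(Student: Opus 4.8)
The plan is to turn the one-step expected decrease of Theorem \ref{thm:DecreaseOnPhi} into a trajectory-level statement by a telescoping/supermartingale argument, exploiting that the Lyapunov function is bounded from below once $f$ is. Taking full expectations in \eqref{eq:MainGoal} and using the tower property gives $\mathbb{E}[\Phi_{k+1}] - \mathbb{E}[\Phi_k] \le -\alpha\,\mathbb{E}[\Sigma_k^2]$, which only requires each $\Sigma_k^2$ to be integrable --- and it is, since the update rule forces $\Sigma_k \le \gamma^{k}\sigma_0$ deterministically. Telescoping from $0$ to $N$ yields
\[
\alpha\sum_{k=0}^{N}\mathbb{E}[\Sigma_k^2] \;\le\; \mathbb{E}[\Phi_0] - \mathbb{E}[\Phi_{N+1}].
\]

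Next I would invoke Assumption \ref{ass:bdn:f}: because $\nu \in (0,1)$ and $\Sigma_{N+1}^2 \ge 0$, we have $\Phi_{N+1} = \nu f(X_{N+1}) + (1-\nu)\Sigma_{N+1}^2 \ge \nu f_{\mathrm{low}}$ almost surely, so $\mathbb{E}[\Phi_{N+1}] \ge \nu f_{\mathrm{low}}$. Since $\Phi_0$ is a finite deterministic constant (the initial point and step size being fixed), this gives $\alpha\sum_{k=0}^{N}\mathbb{E}[\Sigma_k^2] \le \mathbb{E}[\Phi_0] - \nu f_{\mathrm{low}}$ uniformly in $N$, hence $\sum_{k=0}^{\infty}\mathbb{E}[\Sigma_k^2] < \infty$. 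By Tonelli's theorem (the summands being nonnegative) I may move the sum inside the expectation to get $\mathbb{E}\big(\sum_k \Sigma_k^2\big) < \infty$, so that $\sum_k \Sigma_k^2 < \infty$ almost surely and in particular $\Sigma_k \to 0$ almost surely, which is the first assertion.

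For the second assertion, I would fix a realization $\omega$ in the probability-one event on which $\Sigma_k(\omega) \to 0$. Under the boundedness hypothesis the path $\{X_k(\omega)\}$ lives in a compact subset of $\R^n$, so the Bolzano--Weierstrass theorem produces a subsequence $\{X_k(\omega)\}_{k\in\mathcal{K}}$ converging to some $X_*(\omega)$, and along this same subsequence $\Sigma_k(\omega) \to 0$ since the full sequence already does. As this holds for almost every $\omega$, the subsequence $\mathcal{K}$ and limit $X_*$ exist almost surely.

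Given Theorem \ref{thm:DecreaseOnPhi}, this argument is essentially routine; the two points needing care are the transfer from summability-in-expectation to almost-sure summability (Tonelli, using $\Sigma_k^2 \ge 0$), and the fact that the extracted $\mathcal{K}$ and $X_*$ are themselves sample-path dependent, so the conclusion is an almost-sure existence statement rather than convergence of a fixed deterministic subsequence. I expect the only genuine obstacle to be the bookkeeping needed to justify integrability and the conditional-expectation manipulations rigorously.
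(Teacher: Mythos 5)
Your proposal is correct and follows essentially the same route as the paper: take (conditional) expectations of the decrease inequality from Theorem~\ref{thm:DecreaseOnPhi}, telescope, bound $\Phi_{N+1}$ below by $\nu f_{\mathrm{low}}$ to get $\mathbb{E}\bigl(\sum_k \Sigma_k^2\bigr) < \infty$, deduce almost-sure convergence of $\Sigma_k$ to zero, and then extract a convergent subsequence of $\{X_k\}$ pathwise via compactness. The only difference is that you spell out the integrability and Tonelli bookkeeping that the paper leaves implicit, which is a harmless (indeed welcome) refinement rather than a different argument.
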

\end{mybox}

Now that we have established the existence of a converging subsequence, a natural question is to study the properties of its limit point. In our case, we are interested in showing that this limit point satisfies the desired optimality condition for constrained problems, which we briefly review next.

\vspace{-2mm}
\paragraph{Optimality conditions for constrained problems} In optimization, first-order optimality conditions for constrained problems can be defined by using the concept of tangent cones. A known result is that the gradient at optimality belongs to the tangent cone (see , e.g., Thm 5.18~\cite{VariationalAnalysis_book}).
In order to prove that Algorithm~\ref{alg:GESgc} satisfies the desired first-order optimality condition, we will require that for iterates $x \in \Omega$ arbitrarily close to~$x_*$, the updated point $x+td$ (for $t > 0$ and a fixed direction $d$) also belongs to the constraint set $\Omega$. This can simply be guaranteed by ensuring that the set of the directions $d$ is hypertangent to $\Omega$ at $x_*$ \citep{CAudet_JEDennis_2006}. For readers who are not familiar with constrained optimization, we give an overview of the required concepts and definitions in Section~\ref{sec:optimality_condition} in the appendix.

\vspace{-2mm}
\paragraph{Main Convergence Theorem}

We now state the main global convergence result for Algorithm~\ref{alg:GESgc}. A formal variant of this theorem is available in appendix.

\begin{mybox}{gray}
\begin{theorem}[Informal]
\label{th:1}
Assume Assumption \ref{ass:AccEstimates:cons} and the working assumptions of Corollary \ref{corol:StepSizeToZeroAlmostSurely} hold.
Then, almost surely, the limit point of the subsequence of iterates $\{ X_k(\omega) \}_K$ satisfies the desired optimality condition for constrained problem and $\lim_{k \in K} \Sigma_k(\omega) = 0$.
\end{theorem}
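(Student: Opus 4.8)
The plan is to combine the almost-sure decay $\Sigma_k \to 0$ from Corollary~\ref{corol:StepSizeToZeroAlmostSurely} with a directional reading of the \emph{unsuccessful} iterations, converting a failed relaxed sufficient-decrease test into a sign condition on $\nabla f(x_*)^{\top} d$ along the refining directions $d$. I would work on the probability-one event on which $\Sigma_k \to 0$ and on which, along the subsequence $\mathcal{K}$ of Corollary~\ref{corol:StepSizeToZeroAlmostSurely}, $X_k \to X_*$; fix such a realization $\omega$ and write $x_k,\sigma_k,d_k,s_k$ for the realizations of $X_k,\Sigma_k,D_k,S_k$. I first check that $x_* \in \Omega$: every accepted iterate passes the relaxed barrier test $c - \varepsilon_c \sigma \le 0$ at its acceptance scale, and Assumption~\ref{ass:AccEstimates:cons} gives $\|C_k^0 - c(X_k)\|_\infty \le \varepsilon_c \Sigma_k$ almost surely, so $c(x_k)$ is bounded by a quantity of the order of the (vanishing) acceptance step size; continuity of $c$ then yields $c(x_*) \le 0$.

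The probabilistic core is to extract from $\mathcal{K}$ a subsequence of \emph{unsuccessful} iterations on which the objective estimates are $\varepsilon_f$-accurate. Since a successful step multiplies $\sigma_k$ by $\gamma \ge 1$ whereas $\Sigma_k \to 0$ forces infinitely many reductions, the set of unsuccessful indices is infinite almost surely. Because $\mathbb{P}(T_k^f \mid \mathcal{F}_{k-1}) \ge p > \tfrac12$ for every $k$, a conditional (Lévy) Borel--Cantelli argument shows that $T_k^f$ holds at infinitely many of these indices almost surely. Restricting to such indices inside $\mathcal{K}$ and passing to a further subsequence along which the step directions converge, $d_k/\|d_k\| \to d$, produces a refining direction $d$ attached to $x_*$, with $\|d_k\| \le d_{\max}$ controlling the scaling.

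Along this refining subsequence the test fails: $\tilde f^{\trial}_{k+1} > f_k - \tfrac{\kappa}{2}\sigma_k^2$. When $d$ is hypertangent to $\Omega$ at $x_*$, feasibility of $x_*$, $\sigma_k \to 0$ and constraint accuracy force the trial point $x_k+s_k$ to be relaxed-feasible for large $k$, so the barrier is inactive and $\tilde f^{\trial}_{k+1}$ equals the genuine estimate $f_k^1$ at $x_k+s_k$. Combining $\varepsilon_f$-accuracy of $f_k$ and $f_k^1$ with the Lipschitz bound of Assumption~\ref{ass:f},
\[
 f(x_k + \sigma_k d_k) - f(x_k) \;\le\; \sigma_k \nabla f(x_k)^{\top} d_k + \tfrac{L}{2}\sigma_k^2 \|d_k\|^2 ,
\]
the failed test rearranges, after dividing by $\sigma_k$, into
\[
 \nabla f(x_k)^{\top} d_k \;>\; -\big(\tfrac{\kappa}{2} + 2\varepsilon_f + \tfrac{L}{2} d_{\max}^2\big)\,\sigma_k .
\]
Passing to the limit with $x_k \to x_*$, $\sigma_k \to 0$, $d_k \to d$ and continuity of $\nabla f$ yields $\nabla f(x_*)^{\top} d \ge 0$ for every such refining direction.

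It remains to pass from single directions to the full cone. Here I would invoke the richness of the ES sampling: under the standard assumption that the generated directions are asymptotically dense on the unit sphere, the hypertangent refining directions obtained above are dense in $\mathbb{S}^{n-1} \cap T^{H}_{\Omega}(x_*)$ almost surely, and continuity of $\nabla f$ upgrades $\nabla f(x_*)^{\top} d \ge 0$ to $\nabla f(x_*)^{\top} v \ge 0$ for all $v$ in the hypertangent cone $T^{H}_{\Omega}(x_*)$; taking the closure extends this to the Clarke tangent cone, i.e.\ $-\nabla f(x_*)$ lies in the normal cone at $x_*$, which is the asserted optimality condition, with $\lim_{k \in \mathcal{K}} \sigma_k = 0$ already in hand. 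I expect the main obstacle to be the second step: the ``unsuccessful'' labels are $\mathcal{F}_k$- rather than $\mathcal{F}_{k-1}$-measurable and are entangled with the accuracy event $T_k^f$, so the conditional Borel--Cantelli extraction must be set up with care; and one must simultaneously guarantee that the extracted accurate-unsuccessful directions still sweep out a dense subset of the hypertangent cone rather than collapsing onto a single direction.
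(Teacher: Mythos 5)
Your overall architecture (fix a realization in the almost-sure event of Corollary~\ref{corol:StepSizeToZeroAlmostSurely}, use the almost-sure constraint accuracy to make the barrier inactive near the feasible limit point, and convert the sufficient-decrease test plus $\varepsilon_f$-accuracy and the descent lemma into a sign condition on $\nabla f(x_*)^{\top}d$) is the same as the paper's, but the probabilistic engine you propose does not work, and the step you flag as needing ``care'' is a genuine gap rather than a technicality. L\'evy's conditional Borel--Cantelli lemma gives that $T^f_k$ occurs infinitely often; it cannot place those occurrences at \emph{unsuccessful} iterations, let alone at unsuccessful iterations inside $\mathcal{K}$. Whether iteration $k$ is unsuccessful is decided by $F_k^0,F_k^1$, so the success label is $\mathcal{F}_k$-measurable and correlated with $T^f_k$ conditionally on $\mathcal{F}_{k-1}$; nothing in Definition~\ref{defi:probaccfun} excludes a coupling in which accurate estimates always produce success and unsuccessful iterations are always inaccurate, which would make your extraction set finite. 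The paper avoids extraction altogether: it argues by contradiction, assuming $\nabla f(x_*)^{\top}d\le -2\epsilon$ for a limit direction $d$, shows (by the same accuracy-plus-Taylor computation you use, run in the opposite direction) that for large $k\in\mathcal{K}$ \emph{accuracy implies success}, and then couples $B_k=\log_\gamma(\Sigma_k/b_\epsilon)$ with the random walk $W_k=\sum_{i\le k}\left(2\mathbf{1}_{T^f_i}-1\right)$, a submartingale with increments of modulus one and upward drift since $p>\tfrac{1}{2}$; Lemma~\ref{supermartingal:lm} then forces $\limsup_k W_k=+\infty$ almost surely, hence $\limsup_k B_k=+\infty$, contradicting $\sigma_k\le b_\epsilon$ along the subsequence. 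This martingale coupling is the heart of the proof of Theorem~\ref{th:12} and is absent from your proposal; note that any honest repair of your extraction claim (``if accuracy eventually implied success, then\ldots'') reproduces precisely this argument, so it is not an optional refinement but the missing idea.

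Separately, your final step both claims more than the paper proves and relies on an assumption the paper does not make. Theorem~\ref{th:12} concludes only that $\nabla f\left(X_*(\omega)\right)^{\top}d\ge 0$ for those $d\in T_{\Omega}^{\cl}(X_*(\omega))$ that arise as limit points of the algorithm's own directions $\{D_k(\omega)\}_{\mathcal{K}}$; it does not assert that $\nabla f(x_*)^{\top}v\ge 0$ for every $v$ in the hypertangent or Clarke cone. Your upgrade to the full cone invokes ``asymptotic density of the sampled directions on the unit sphere,'' which is not among Assumptions~\ref{ass:f}--\ref{ass:AccEstimates:cons}; under the paper's hypotheses that part of your conclusion is unsupported (your own worry that the accurate-unsuccessful directions may ``collapse onto a single direction'' is exactly right, and the paper sidesteps it by stating the weaker, per-direction result). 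A smaller issue: normalizing $d_k/\|d_k\|$ is delicate, since your inequality $\nabla f(x_k)^{\top}d_k>-C\sigma_k$ only passes to the normalized limit when $\sigma_k/\|d_k\|\to 0$, which can fail if $\|d_k\|\to 0$; it is cleaner to work, as the paper does, with unnormalized limit directions, which exist because $\|d_k\|\le d_{\max}$.
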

\end{mybox}

Theorem~\ref{th:12} (appendix) gives the formal statement of Theorem~\ref{th:1}. Theorem~\ref{th:12} states that, almost surely, the gradient at a limit point (of the algorithm iterate points) satisfies the desired optimality condition, i.e. it makes an acute angle with the tangent cone of the constraints. We therefore have shown convergence of Algorithm~\ref{alg:GESgc} to a point that is guaranteed to satisfy the desired constraints under a set of assumptions that, as discussed below, can be achieved in practice.

\vspace{-2mm}
\paragraph{Remark about satisfiability of the assumptions:}
We note that the differentiability and boundedness assumptions (Assumptions~\ref{ass:f} and~\ref{ass:bdn:f}) are common in machine learning. While the satisfaction of the accuracy required in Assumptions \ref{ass:ProbAccEstimates:obj} and \ref{ass:AccEstimates:cons} might appear less trivial, one can in fact easily derive practical bounds in the context of finite sum minimization problems, e.g. Lemma 4.2~\cite{EBergou_YDiouane_VKungurtsev_CWRoyer_2018b}.

For instance, one can perform multiple function evaluations and average them out. We therefore get an estimate $f_x = \frac{1}{N}\sum_{i=1}^N f_i(x)$, where the set $\{f_1,...f_N\}$ correspond to independent samples. Assuming bounded variance, i.e. $\text{var}(f(x)) \le v$, known concentration results guarantee that we can obtain $p$-probabilistically $\epsilon_f$-accurate estimates for $ N \geq \frac{16v}{\epsilon_f^2 \sigma_k^4}\log(\frac{2}{1 - p} ) $ number of evaluations. To also satisfy the variance assumption, we additionally require $N \geq \frac{v}{\epsilon_f \sigma_k^4}$. We also note that one could still violate Assumption \ref{ass:ProbAccEstimates:obj} to some degree and obtain convergence to a neighborhood around the optimum.

In the presence of constraints, Assumption~\ref{ass:AccEstimates:cons} allows the use of  inaccurate estimations of the constraints, in particular during the early stage of the optimization process (when $\sigma_k$ may be large). In practice, our framework can also be seen suitable to  handle hidden constraints (e.g., code failure) as far as one assumed that the set of such constraints is of $P$-measure zero. A second practical scenario is related to equality constraints where the feasible domain can be hard to reach using an evolution strategy. Using Assumption~\ref{ass:AccEstimates:cons} allows us to relax the feasible domain. For instance, an equality constraint of the form $c(x)=0$, by using Assumption~\ref{ass:AccEstimates:cons}, will be handled as $|c(x)| \le \epsilon_c \sigma_k $. This relaxation can gradually help the ES to explore  the feasible domain and improve the objective function. The empirical validity of the assumptions on our test cases is given in the appendix, see Figure~\ref{fig:variance_1} and \ref{fig:variance_2}.

\vspace{-2mm}
\paragraph{Remark about novelty of our analysis:}
Compared to prior works, we are the first to propose a class of globally convergent ES methods to handle noisy estimates of the objective function $f(x)$. The extension of the proposed framework to handle constraints in a stochastic setting, as described by Eq.~\ref{extreme_2}, is a second contribution of this work. The convergence analysis in particular requires designing a new Lyapunov function (as given by Eq.~\ref{eq:Lyapunov}) that depends on both the function values and the level of noise on the objective function. Detailed proofs are provided in the appendix.


\vspace{-2mm}
\section{IMPLEMENTATION AND TEST CASES}
\vspace{-2mm}

We tested an adaptation of our Algorithm~\ref{alg:GESgc} that used the  guided search approach introduced in~\cite{maheswaranathan2018guided}. The Guided-Evolution Strategy (GES) defines a search distribution from a subspace spanned by a set of surrogate gradients. Importantly, this modification is also covered by the convergence guarantees derived in Theorem~\ref{th:1} (note that~\cite{maheswaranathan2018guided} did not provide convergence guarantees and was not applied to the constrained setting). We refer the reader to Section~\ref{sec:guided_search} in the appendix for further details. From here we denote our implementation of Algorithm \ref{alg:GESgc} as PCCES ({\bf{P}}rovably {\bf{C}}onvergent {\bf{C}}onstrained {\bf{ES}}). In what comes next, we present two different reinforcement learning applications. 

\vspace{-2mm}
\paragraph{Constrained entropy maximization:}
We consider the standard formalization of reinforcement learning (RL) as a finite time Markov Decision Process (MDP). At each time step $t$, an RL agent receives a state $s_t$ based on which it selects an action $a_t$ using a policy denoted by $\pi$. The environment then provides a reward $r_t$ and a new state to the agent. We optimize the policy such that it learns to output the optimal sequence of actions that maximizes the cumulative reward over all steps. Formally, we consider a trajectory $\tau := (s_0, a_0, r_0, \ldots, s_T, a_T, r_T)$ as a sequence of state-action-reward triples which is distributed according to $\pi$. The goal of the RL agent is to maximize the objective $R(\tau) = \sum_{t=0}^T \theta^t r_t$,
where $\theta \in [0,1]$ is the discount factor. We consider stochastic policies $\pi_x(a|s)$~\footnote{In the following, we may omit the subscript $x$ for simplicity. However, the reader should understand a maximization over $\pi$ as maximizing over the parameters $x$ of the neural network.} which are parameterised by $x$, the parameters of a neural network. 
Then, we define the expected return as a function of trajectories generated by a policy, i.e., $\E_{\tau \sim \pi}[R(\tau)]$,
where the return $R(\tau)$ is the discounted sum of rewards from the trajectory $\tau$. The problem of finding the optimal policy $\pi^{*}$ is thus given by $
    \pi^{*} := \argmax_{\pi} \E_{\tau \sim \pi}[R(\tau)]$.
 
In this paper, we change the latter unconstrained optimization problem by adding constraints while maximizing the entropy of the learned policy. This has been shown to improve the exploration abilities of the agent and as a result yield higher return policies \citep{mnih2016asynchronous, ahmed2018understanding, haarnoja2018soft}. In evolution strategies, the iterations are episodic in nature and not over time steps. Thus, we define the entropy of a policy over complete trajectories. We define the entropy of a policy $\pi$ over a trajectory as the sum of the entropy over states in the trajectory, i.e. $H_{\pi}(\tau) =  \sum_{t=1}^{T} h_{\pi}(s_{t})$, where, $\tau$ is the trajectory and $h_{\pi}(s_{t})$ is the entropy of the policy distribution at time step $t$. We then define the constraint set $\Omega$ as constraints that determine an acceptable interval for the entropy $H_{\pi}(\tau)$.
%
We obtain the following constrained entropy maximization problem:
\begin{align}
    &\max_{\pi \in \Omega_1} \left[ \E_{\tau \sim \pi}[R(\tau)] + \mu \E_{\tau \sim \pi}[H_{\pi}(\tau)] \right], 
\label{eq:opt_pi_c}
\end{align}
where ${\small \Omega_1 = \{ \pi: h_l \leq \E_{\tau \sim \pi}[H_{\pi}(\tau)] \leq h_{u} \}}$,  $h_{u}$ and $h_{l}$ are fixed bounds for the entropy and $\mu > 0$ is weights the importance of the entropy term . 
\begin{figure*}[h]
    \centering
    \includegraphics[width=\linewidth]{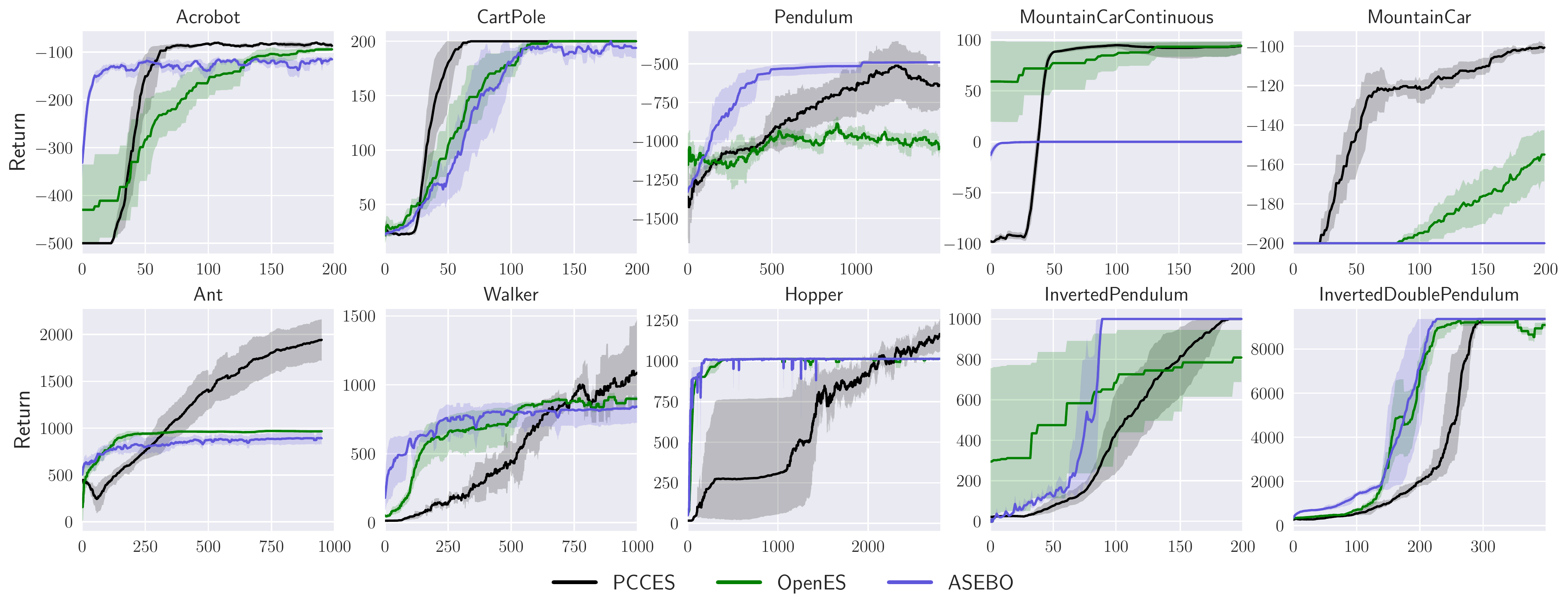}
    \vspace{-2mm}
    \caption{\small{PCCES compared with OpenES and ASEBO. We observe that even though the methods perform similarly in the early optimization phase (PCCES sometimes being slower), PCCES consistently achieves higher returns.}}
    \label{fig:max_ent_plots}
\end{figure*}            
\vspace{-1mm}
\paragraph{Policy optimization with constraints:} In this application, we optimize for policies which maximize reward while including non-relaxable conditions over the MDP. Constrained Markov Decision Processes (CMDP) \citep{Altman1999ConstrainedMD} is a framework for representing systems with such conditions. Similar to the standard MDP framework, we can then obtain the optimal policy by maximizing the return, over the set of policies which satisfy the constraints. We assume that we are given a set of constraint cost functions $g_{\pi}^{1}(\tau), \dots, g_{\pi}^{r}(\tau)$ which depend on the policy $\pi$ used in a specific application. Hence
{\small
$
\Omega_2 := \{\pi: \E_{\tau \sim \pi} [g_{\pi}^{i}(\tau)] \leq t_{i}, i = 1,\dots,r \},
$}
where each $ \{ t_i \}_{i=1}^{r}$ are chosen threshold values.
Then, for $\mu$ is a penalty parameter, we target to solve the optimization problem
\vspace{-2mm}
\begin{equation}
\label{eq:barrier_const}
{\small
    \max_{\pi \in \Omega_2}  [\E_{\tau \sim \pi}[R(\tau)] + \mu \E_{\tau \sim \pi}  [\sum_{i=1}^{r} g_{\pi}^{i}(\tau)]].}
\end{equation}


\vspace{-2mm}
\section{EXPERIMENTAL RESULTS}
\label{sec:experiments}
\vspace{-2mm}
\begin{figure*}[!h]
    \centering
    \includegraphics[ width=\linewidth]{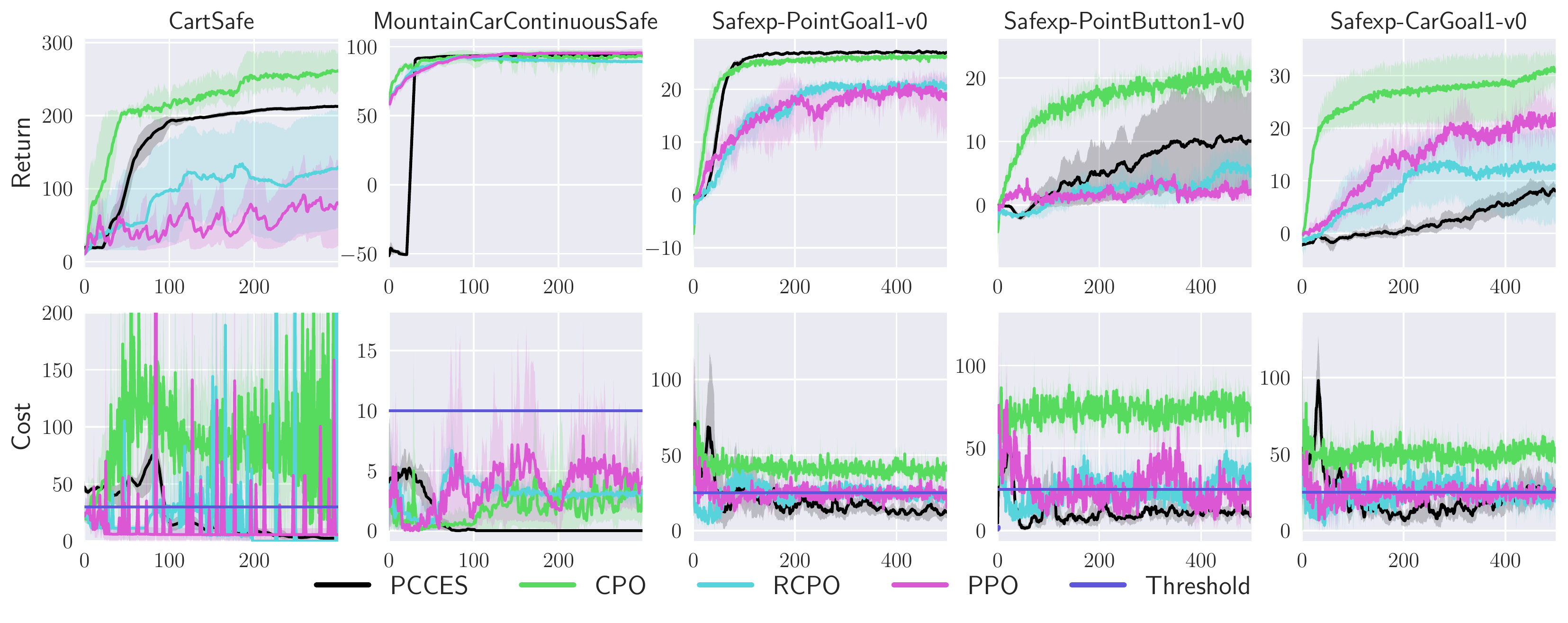}
    \vspace{-2mm}
    \caption{\small{PCCES compared with CPO, RCPO and PPO. The horizontal blue line is the threshold under which the cost should stay. Note that due to the stochastic estimates, PCCES slightly violates the constraints at the beginning but it consistently returns a final feasible solution. The amount of violation can of course be controlled by lowering the stochaticity of the environment. Importantly, all the other approaches we benchmarked are not below the threshold during the entire training phase. This is especially important at the end of training where the returned solution does not satisfy the desired constraints. }}
    \label{fig:const_plots}
\end{figure*}            

\textbf{Constrained entropy maximization:}
We first evaluate PCCES on 5 control tasks  available in OpenAI Gym \citep{brockman2016openai} and 5 control tasks from PyBullet \citep{coumans2019}.  
In all of these tasks, the goal is to maximize the accumulated reward over a finite number of steps. We compare the performance of PCCES with OpenES~\citep{salimans2017evolution} and  ASEBO~\citep{asebo}. We chose to compare against OpenES as it is one of the most popular algorithms in RL while ASEBO is specifically designed to address the exploration-exploitation trade-off. We note that~\cite{salimans2017evolution} showed that OpenES performs as well as its model free counterparts such as TRPO \citep{schulman2017proximal} and A3C \citep{mnih2016asynchronous} over a large number of benchmarks. 

We optimize a policy parameterised by a two-layer network with 10 and 64 hidden units for tasks in OpenAI Gym and PyBullet respectively. At every iteration, 40 points are sampled for the OpenAI Gym tasks and 120 points are sampled for the PyBullet tasks. For PCCES, policy gradients at $m$ previous timesteps are used as surrogate gradients. Update directions in the previous timesteps can also be used as surrogate gradients. We use $m=20$ and we do not update our policy for the first 20 iterations. We conduct ten runs with varying random seeds for each environment. The policy is evaluated during training at every update and we store the return as the average of the last ten evaluations.  We report the average return with standard deviation over the different runs during training in Figure \ref{fig:max_ent_plots}. We also report the entropy during training compared against the entropy constraint in Appendix, see Figure \ref{fig:entropy_const_exp}. The remaining training curves are in Figures (\ref{fig:acrobot_1}-\ref{fig:mcc_1}) in the appendix. For details regarding other hyperparameters, we refer the reader to Section~\ref{sec:exp_details} in the appendix.\\

\textbf{Policy Optimization with Constraints:} In this set of experiments, we evaluate our algorithm on five control tasks with constraints on the state space. ``CartSafe" and ``MountainCarContinuousSafe" are modifications of the ``CartPole" and ``MountainCarContinuous" environments from OpenAI Gym and PointGoal, PointButton and CarGoal are from the safety-gym \cite{Ray2019}, with constraints that penalize visiting restricted states and the goal is to maximize reward while keeping the constraint penalty below a threshold. We compare our performance with Constrained Policy Optimization (CPO) \citep{achiam2017constrained}, Reward Constrained Policy Optimization (RCPO) \cite{tessler:18} and Proximal Policy Optimization \cite{schulman2017proximal} with Lagrange constraints in both the environments.
The cost threshold for CartSafe is 30, MountainCarContinuousSafe is 10 and for the safety-gym environments is 25. We use a policy parameterized by two-layer  networks with tanh units. We train our algorithm for 300 iterations and train CPO, RCPO, PPO for 300 epochs (30000 timesteps in each epoch) for CartSafe and MoutainCarContinuousSafe. We increase the iterations/epochs to 500 for safety-gym tasks. PCCES does only one update per iteration, while the rest do multiple updates per epoch. This suggests a better scaling of PCCES with less wall-clock time per iteration. We average the runs over 10 different random seeds and plot the cost and return. 

\textbf{Analysis:} Figure \ref{fig:max_ent_plots} (and Figures \ref{fig:acrobot_1}-\ref{fig:mcc_1} in appendix) shows that PCCES always finds a better or equivalent solution and also does not stagnate or diverge in the tasks we tried. The sufficient increase condition makes PCCES more robust against divergence and entropy maximization pushes the algorithm to explore the state space, as a result PCCES eventually reaches better returns for a majority of tasks. Ant is a challenging task due to noisy estimates of its return and the need for exploration. The importance of entropy maximization and the sufficient increase condition can be most clearly seen in the Ant task, for which the performance of OpenES and ASEBO stagnates. Similarly, OpenES and ASEBO performance stagnates in Walker and Hopper, while PCCES consistently keeps improving. 
Figure \ref{fig:entropy_const_exp} (appendix) shows that PCCES consistently ensures the entropy constraints are also satisfied.

Figure \ref{fig:const_plots} shows the performance of PCCES compared to various baselines (CPO, RCPO, PPO) for the constrained tasks.
PCCES consistently returns a solution which has a cost lower than the threshold and improves the episodic return for all the environments tested. CPO on the other hand does not keep the cost below threshold, while RCPO and PPO keep the cost low but achieve low returns compared to PCCES for certain environments (CartSafe, PointGoal, PointButton). 
%


\section{CONCLUSION}

We proposed a class of evolutionary method to solve stochastic constrained problems, with applications to reinforcement learning. One feature that distinguishes our approach from prior work is its global convergence guarantee for stochastic setting. We note that our proof technique does not exploit any specific information about the constraints being related to the entropy or the safety of the policy. Our algorithm could therefore be applied to different types of constraints and also problems outside the area of reinforcement learning. Empirically, we have seen that PCCES consistently achieves higher returns compared to other baselines but might require more iterations to do so. One potential direction to address this problem would be to anneal the constraints. Another future direction to pursue would be to benchmark PCCES in a distributed setting, especially given that prior work by~\cite{salimans2017evolution} demonstrated that the strength of ES lies in its ability to scale over large clusters with less communication between actors. Other directions of interest would be to develop convergence rates as well as second-order guarantees~\citep{gratton2016second, lucchi2021second}. Finally, our approach could also be extended to other problems in machine learning, such as min-max optimization~\citep{anagnostidis2021direct}.

\section{ACKNOWLEDGEMENT}

Part of this work was performed while Aurelien Lucchi and Vihang Patil were at ETH Z\"urich.

\bibliography{reference}


\clearpage
\appendix

\thispagestyle{empty}

\onecolumn \makesupplementtitle



\section{Main analysis}

\subsection{Existence of a converging subsequence}

\begin{lemma} \label{lm:1}Let Assumptions \ref{ass:f}  and   \ref{ass:ProbAccEstimates:obj}  hold. Then, for every iteration $k$, one has 
$$
\mathbb{E}\left(\mathbf{1}_{\overline{T^f_k}} \left|F_k^0 - f(X_k) \right|  | \mathcal{F}_{k-1} \right)  \le (1-p_k^*)^{1/2} \varepsilon_v \Sigma_k^2
$$
and
$$
\mathbb{E}\left(\mathbf{1}_{\overline{T^f_k}} \left|F_k^1 - f(X_k+S_k) \right|  | \mathcal{F}_{k-1} \right)  \le (1-p_k^*)^{1/2} \varepsilon_v \Sigma_k^2
$$
where $\overline{T^f_k}$ denotes the event where the objective function estimates $(F_k^0, F_k^1)$ are inaccurate. $\mathbf{1}_{\overline{T^f_k}}$  is the indicator function of the event set $\overline{T^f_k}$.
\end{lemma}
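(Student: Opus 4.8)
The plan is to bound the conditional expectation by a product of two factors via the conditional Cauchy--Schwarz inequality: one factor controlling the probability of the inaccurate event $\overline{T^f_k}$, and the other controlling the size of the estimation error through the $\varepsilon_v$-variance condition in Assumption~\ref{ass:ProbAccEstimates:obj}(2). Writing $p_k^* \stackrel{\Delta}{=} \mathbb{P}(T^f_k \mid \mathcal{F}_{k-1})$, so that $\mathbb{P}(\overline{T^f_k}\mid \mathcal{F}_{k-1}) = 1 - p_k^*$, the key observation is that the indicator $\mathbf{1}_{\overline{T^f_k}}$ cleanly decouples these two effects.

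First I would apply the conditional Cauchy--Schwarz inequality to the product $\mathbf{1}_{\overline{T^f_k}} \, |F_k^0 - f(X_k)|$, which gives
$$\mathbb{E}\!\left(\mathbf{1}_{\overline{T^f_k}} |F_k^0 - f(X_k)| \,\Big|\, \mathcal{F}_{k-1}\right) \le \left(\mathbb{E}\!\left(\mathbf{1}_{\overline{T^f_k}}^2 \mid \mathcal{F}_{k-1}\right)\right)^{1/2}\left(\mathbb{E}\!\left(|F_k^0 - f(X_k)|^2 \mid \mathcal{F}_{k-1}\right)\right)^{1/2}.$$
Then I would use that an indicator is idempotent, $\mathbf{1}_{\overline{T^f_k}}^2 = \mathbf{1}_{\overline{T^f_k}}$, so that its conditional expectation is exactly $\mathbb{P}(\overline{T^f_k}\mid \mathcal{F}_{k-1}) = 1 - p_k^*$. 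For the second factor, the $\varepsilon_v$-variance condition yields $\mathbb{E}(|F_k^0 - f(X_k)|^2 \mid \mathcal{F}_{k-1}) \le \varepsilon_v^2 \Sigma_k^4$. Multiplying the two bounds produces the claimed $(1-p_k^*)^{1/2}\varepsilon_v \Sigma_k^2$, and the estimate for $F_k^1$ follows by the identical argument using the second half of the variance condition.

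There is no serious obstacle here: the statement is essentially a one-line consequence of Cauchy--Schwarz once the variance condition is available. The only point requiring a little care is the measurability bookkeeping — one must check that $\mathbf{1}_{\overline{T^f_k}}$, $F_k^0$, and $f(X_k)$ are handled consistently relative to the conditioning $\sigma$-algebra $\mathcal{F}_{k-1}$ so that the conditional Cauchy--Schwarz inequality applies, and that $p_k^*$ (hence $1-p_k^*$) is $\mathcal{F}_{k-1}$-measurable, which holds by definition since it is a conditional probability. I would also remark that Assumption~\ref{ass:ProbAccEstimates:obj}(1) guarantees $p_k^* \ge p$, so the bound may later be loosened to the uniform (non-random) form $(1-p)^{1/2}\varepsilon_v\Sigma_k^2$ if that is more convenient for the proof of Theorem~\ref{thm:DecreaseOnPhi}.
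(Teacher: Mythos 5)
Your proof is correct and matches the intended argument: the paper itself simply defers to Lemma~1 of the cited reference \cite{Sto_Mads_2019}, whose proof is exactly this conditional Cauchy--Schwarz step combined with the idempotence of the indicator and the $\varepsilon_v$-variance condition of Assumption~\ref{ass:ProbAccEstimates:obj}(2). Your measurability remark (that $\Sigma_k$ and $p_k^*$ are $\mathcal{F}_{k-1}$-measurable, so the bound $\varepsilon_v\Sigma_k^2$ can be extracted from the conditional second moment) is precisely the bookkeeping that makes the one-line argument rigorous.
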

\begin{proof}
The proof is the same as in Lemma 1~\cite{Sto_Mads_2019}.
\end{proof}
 \begin{lemma} \label{lm:2}Let Assumptions \ref{ass:f}  and   \ref{ass:ProbAccEstimates:obj}  hold. Then for every iteration $k$, one has 
$$
\mathbb{E}\left(\mathbf{1}_{T^f_k} \left(\Phi_{k+1} - \Phi_k\right) | \mathcal{F}_{k-1} \right)  \le  - (1 - \nu)(1- \gamma^{-2})p_k^*\Sigma_k^2,
$$
 where $\mathbf{1}_{T^f_k}$  is the indicator function of the event set $T^f_k$.
\end{lemma}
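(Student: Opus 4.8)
The plan is to establish the claimed inequality pointwise on the ``good'' event $T^f_k$ and only then integrate. First I would argue the reduction: it suffices to show that, on $T^f_k$, every realization satisfies $\Phi_{k+1}-\Phi_k \le -(1-\nu)(1-\gamma^{-2})\Sigma_k^2$. Off $T^f_k$ the integrand $\mathbf{1}_{T^f_k}(\Phi_{k+1}-\Phi_k)$ vanishes, so this pointwise estimate gives $\mathbf{1}_{T^f_k}(\Phi_{k+1}-\Phi_k)\le -(1-\nu)(1-\gamma^{-2})\Sigma_k^2\,\mathbf{1}_{T^f_k}$ everywhere. Since $\Sigma_k$ is $\mathcal{F}_{k-1}$-measurable and the coefficient $-(1-\nu)(1-\gamma^{-2})$ is a nonpositive constant, taking conditional expectation and using $\mathbb{E}(\mathbf{1}_{T^f_k}\mid\mathcal{F}_{k-1})=p_k^\ast$ (where $p_k^\ast:=\mathbb{P}(T^f_k\mid\mathcal{F}_{k-1})$, the same quantity appearing in Lemma~\ref{lm:1}) yields exactly the stated bound. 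This turns the probabilistic claim into a deterministic one-step estimate of $\Phi_{k+1}-\Phi_k$ on $T^f_k$.

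The core is then a case split following Step~4 of Algorithm~\ref{alg:GESgc}. In an \emph{unsuccessful} iteration we have $X_{k+1}=X_k$ and $\Sigma_{k+1}=\gamma^{-1}\Sigma_k$, so by the definition \eqref{eq:Lyapunov} the objective term cancels and $\Phi_{k+1}-\Phi_k=(1-\nu)(\gamma^{-2}-1)\Sigma_k^2=-(1-\nu)(1-\gamma^{-2})\Sigma_k^2$, matching the target with equality; this case is ``free.'' In a \emph{successful} iteration $X_{k+1}=X_k+S_k$ and $\Sigma_{k+1}=\gamma\Sigma_k$, giving $\Phi_{k+1}-\Phi_k=\nu\big(f(X_k+S_k)-f(X_k)\big)+(1-\nu)(\gamma^2-1)\Sigma_k^2$. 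Now the step-size term is \emph{positive}, so I must extract enough genuine decrease from the objective term to cover both it and the target decrease.

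For that I would combine the accuracy of the estimates on $T^f_k$ with the acceptance test. A successful step has finite trial barrier value, so by \eqref{extreme_2} the trial point is feasible and the test reads $F_k^1\le F_k^0-\tfrac{\kappa}{2}\Sigma_k^2$. On $T^f_k$ one has $|F_k^0-f(X_k)|\le\varepsilon_f\Sigma_k^2$ and $|F_k^1-f(X_k+S_k)|\le\varepsilon_f\Sigma_k^2$, whence the \emph{true} decrease obeys $f(X_k+S_k)-f(X_k)\le(2\varepsilon_f-\tfrac{\kappa}{2})\Sigma_k^2$, which is strictly negative because $\varepsilon_f<\kappa/4$. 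Substituting, $\Phi_{k+1}-\Phi_k\le\big[\nu(2\varepsilon_f-\tfrac{\kappa}{2})+(1-\nu)(\gamma^2-1)\big]\Sigma_k^2$, and the remaining task is to show the bracket is at most $-(1-\nu)(1-\gamma^{-2})$. I would rearrange this into the equivalent form $(1-\nu)(\gamma^2-\gamma^{-2})\le\nu(\tfrac{\kappa}{2}-2\varepsilon_f)$ and attack it via $\gamma^2-\gamma^{-2}\le 2(\gamma^2-1)$ (valid for $\gamma\ge1$, since it is equivalent to $(\gamma^2-1)^2\ge0$) together with the condition $\tfrac{\nu}{1-\nu}\ge\tfrac{4(\gamma^2-1)}{\kappa}$, which forces $\nu$ large enough that the guaranteed objective decrease $\tfrac{\nu\kappa}{2}\Sigma_k^2$ dominates the step-size growth $(1-\nu)(\gamma^2-1)\Sigma_k^2$.

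I expect the main obstacle to be precisely this last bracket estimate in the successful case: it requires trading the step-size growth factor $\gamma^2-1$ against the sufficient-decrease constant $\kappa$ eroded by the tolerance $2\varepsilon_f$, and the constants must be managed carefully so that the two hypotheses $\varepsilon_f<\kappa/4$ and $\tfrac{\nu}{1-\nu}\ge\tfrac{4(\gamma^2-1)}{\kappa}$ (driving $\nu$ toward $1$) indeed close the inequality. By contrast, the unsuccessful case is immediate, and the probabilistic layer is routine once the pointwise bound is secured, since $\mathcal{F}_{k-1}$ renders $\Sigma_k$ measurable and exposes $p_k^\ast$ as the conditional probability of $T^f_k$.
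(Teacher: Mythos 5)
Your proof follows the paper's route exactly --- a pointwise case split on $T^f_k$ (unsuccessful iterations giving $\Phi_{k+1}-\Phi_k=-(1-\nu)(1-\gamma^{-2})\Sigma_k^2$ for free, successful ones handled via the acceptance test plus $\varepsilon_f$-accuracy), followed by the routine conditional-expectation step --- but the step you yourself single out as the crux genuinely fails with the constants you invoke. You need
\[
(1-\nu)\left(\gamma^2-\gamma^{-2}\right)\;\le\;\nu\left(\frac{\kappa}{2}-2\varepsilon_f\right),
\]
and you claim it follows from $\gamma^2-\gamma^{-2}\le 2(\gamma^2-1)$ together with $\varepsilon_f<\kappa/4$ and $\frac{\nu}{1-\nu}\ge\frac{4(\gamma^2-1)}{\kappa}$. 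Those hypotheses yield only $(1-\nu)(\gamma^2-\gamma^{-2})\le\nu\kappa/2$, whereas the right-hand side is $\nu\kappa/2-2\nu\varepsilon_f$: the deficit $2\nu\varepsilon_f$ is uncontrolled, because $\varepsilon_f<\kappa/4$ allows $\kappa/2-2\varepsilon_f$ to be arbitrarily close to $0$. Concretely, take $\gamma^2=2$, $\kappa=4$, $\nu=\tfrac12$ (so $\frac{\nu}{1-\nu}=1=\frac{4(\gamma^2-1)}{\kappa}$) and $\varepsilon_f=1-\delta$ with $\delta>0$ small: the successful-case bracket equals $\nu(2\varepsilon_f-\tfrac{\kappa}{2})+(1-\nu)(\gamma^2-1)=\tfrac12-\delta>0$, while the target is $-(1-\nu)(1-\gamma^{-2})=-\tfrac14$. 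Such iterations are realizable, since with accurate estimates the acceptance test only forces a true decrease of $2\delta\Sigma_k^2$; so under these constants the claimed pointwise bound --- and indeed the lemma's conclusion --- can fail.

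What saves the paper is that its own proof quietly works under stronger conditions than those stated in Theorem~\ref{thm:DecreaseOnPhi}: mid-proof it assumes $\frac{\nu}{1-\nu}\ge\frac{8(\gamma^2-1)}{\kappa}$ (an $8$, not a $4$) and uses $2\varepsilon_f-\tfrac{\kappa}{2}\le-\tfrac{\kappa}{4}$, i.e.\ effectively $\varepsilon_f\le\kappa/8$ (its citation of ``$\varepsilon_f\le\kappa/4$'' at that point is a slip of the same kind as yours). With these it runs the two-step chain: bracket $\le -\nu\tfrac{\kappa}{4}+(1-\nu)(\gamma^2-1)\le-\nu\tfrac{\kappa}{8}$, and then $-\nu\tfrac{\kappa}{8}\le-(1-\nu)(1-\gamma^{-2})$ because $\frac{\nu}{1-\nu}\ge\frac{8(\gamma^2-1)}{\kappa}\ge\frac{8(1-\gamma^{-2})}{\kappa}$. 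Your one-step comparison closes under exactly the same strengthened constants ($\kappa/2-2\varepsilon_f\ge\kappa/4$ plus $\frac{\nu}{1-\nu}\ge\frac{8(\gamma^2-1)}{\kappa}$), so the repair is pure constant bookkeeping; but as written, your final inequality does not close, and no argument can close it under the hypotheses you cite.
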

\begin{proof}
Consider a realization of a given iteration $k$ of Algorithm \ref{alg:GESgc} for which the objective function estimates $ \{f_k^0, f_k^1\}_k$ are $\varepsilon_f$-accurate. Then,
if the iteration is unsuccessful $x_{k+1} = x_k$ and $\sigma_{k+1} = \gamma^{-1} \sigma_k$, this leads to 
	\begin{equation} \label{eq:DeltaPhiUnsuccessful}
		\phi_{k+1} - \phi_{k} = (1 - \nu)(\gamma^{-2}  - 1)\sigma_k^2:=b_1 <0.
	\end{equation}
	Otherwise, if the iteration is successful, then one has $x_{k+1} = x_k + \sigma_k d_k$, where  $\sigma_{k+1} \leq \gamma \sigma_k$. One thus has:
	\begin{eqnarray*} 
		\phi_{k+1} - \phi_k &\leq& \nu \left( f(x_{k+1}) - f(x_k) \right) + (1 - \nu)(\gamma^2 - 1)\sigma_k^2 \\
		& \le & \left(\nu (2\varepsilon_f - \frac{\kappa}{2})+ (1 - \nu)(\gamma^2 - 1) \right) \sigma_k^2\\
		& \le & \left(- \nu \frac{\kappa}{4}+ (1 - \nu)(\gamma^2 - 1) \right) \sigma_k^2,
	\end{eqnarray*}
	where the last inequality is obtained using  the fact that $\varepsilon_f\le \frac{\kappa}{4}$.
	Assuming that  $ \frac{\nu}{1-\nu} \ge \frac{8(\gamma^2 - 1)}{\kappa}$, one deduces that 
	\begin{eqnarray*} 
		\phi_{k+1} - \phi_k & \le & - \nu \frac{\kappa}{8} \sigma_k^2:=b_2<0.
	\end{eqnarray*}
		We note also that, as $\frac{\nu}{1 - \nu} \ge   \frac{8(\gamma^2 - 1)}{\kappa\gamma^2}$, one has   $b_2\le b_1$. Hence, one deduces that for any iteration $k$ conditioned by this case, one has 
		\begin{eqnarray*} \label{eq:b1}
		\phi_{k+1} - \phi_k & \le & b_1.
	\end{eqnarray*}
	Hence, since the event  $T^f_k$ occurs with a probability $p_k^*$, one gets
	$$
\mathbb{E}\left(\mathbf{1}_{T^f_k} \left(\Phi_{k+1} - \Phi_k\right) | \mathcal{F}_{k-1} \right)  \le - (1 - \nu)(1- \gamma^{-2})p_k^*\Sigma_k^2.
$$
\end{proof}
\begin{lemma} \label{lm:3}Let Assumption \ref{ass:f}  hold. Suppose that Assumption \ref{ass:ProbAccEstimates:obj}. Then, for every iteration $k$, one has 
$$
\mathbb{E}\left(\mathbf{1}_{\overline{T^f_k}}(\Phi_{k+1} - \Phi_k)  | \mathcal{F}_{k-1}\right)	  \le 2 \nu (1-p_k^*)^{1/2} \varepsilon_v \Sigma_k^2.
$$
\end{lemma}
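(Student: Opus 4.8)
The plan is to establish a \emph{pathwise} (realization-wise) upper bound on $\Phi_{k+1}-\Phi_k$ that holds on every sample path and depends only on the two estimation errors $|F_k^0 - f(X_k)|$ and $|F_k^1 - f(X_k+S_k)|$; multiplying by $\mathbf{1}_{\overline{T^f_k}}$, taking the conditional expectation, and invoking Lemma~\ref{lm:1} on each error term then delivers the claim. Writing $\Phi_{k+1}-\Phi_k = \nu(f(X_{k+1})-f(X_k)) + (1-\nu)(\Sigma_{k+1}^2 - \Sigma_k^2)$, I would split the argument according to whether iteration $k$ is successful or unsuccessful, exactly as in the proof of Lemma~\ref{lm:2}, but \emph{without} using the accuracy bound, since here we are on the inaccurate event.

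The unsuccessful case is immediate: there $X_{k+1}=X_k$ and $\Sigma_{k+1}=\gamma^{-1}\Sigma_k$, so $\Phi_{k+1}-\Phi_k = (1-\nu)(\gamma^{-2}-1)\Sigma_k^2 \le 0$, which is trivially dominated by the nonnegative estimation-error bound I am aiming for. The successful case is the crux. Here the trial point was accepted, hence feasible, so the barrier estimate reduces to the objective estimate, $\tilde f^{\trial}_{k+1}=F_k^1$, while the stored value is $f_k = F_k^0$; the acceptance test then reads $F_k^1 - F_k^0 \le -\tfrac{\kappa}{2}\Sigma_k^2$.

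I would then telescope the true increment through the estimates,
$$ f(X_k+S_k)-f(X_k) = \big(f(X_k+S_k)-F_k^1\big) + \big(F_k^1 - F_k^0\big) + \big(F_k^0 - f(X_k)\big), $$
bound the first and third brackets by $|F_k^1 - f(X_k+S_k)|$ and $|F_k^0-f(X_k)|$, and use the acceptance test on the middle one. Combined with $\Sigma_{k+1}^2-\Sigma_k^2=(\gamma^2-1)\Sigma_k^2$ on success, this gives
$$ \Phi_{k+1}-\Phi_k \le \nu\big(|F_k^0-f(X_k)|+|F_k^1-f(X_k+S_k)|\big) + \Big[(1-\nu)(\gamma^2-1)-\nu\tfrac{\kappa}{2}\Big]\Sigma_k^2. $$
The bracketed coefficient is nonpositive precisely because the working assumptions force $\tfrac{\nu}{1-\nu}\ge \tfrac{4(\gamma^2-1)}{\kappa}\ge \tfrac{2(\gamma^2-1)}{\kappa}$, so the step-size growth on a successful step is absorbed by the sufficient-decrease margin. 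Hence on both successful and unsuccessful iterations the pathwise bound $\Phi_{k+1}-\Phi_k \le \nu(|F_k^0-f(X_k)|+|F_k^1-f(X_k+S_k)|)$ holds.

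Finally I would multiply through by $\mathbf{1}_{\overline{T^f_k}}$, take $\mathbb{E}(\cdot\mid\mathcal{F}_{k-1})$, and apply Lemma~\ref{lm:1} to each of the two conditional expectations $\mathbb{E}(\mathbf{1}_{\overline{T^f_k}}|F_k^0-f(X_k)|\mid\mathcal{F}_{k-1})$ and $\mathbb{E}(\mathbf{1}_{\overline{T^f_k}}|F_k^1-f(X_k+S_k)|\mid\mathcal{F}_{k-1})$, each bounded by $(1-p_k^*)^{1/2}\varepsilon_v\Sigma_k^2$, yielding the stated $2\nu(1-p_k^*)^{1/2}\varepsilon_v\Sigma_k^2$. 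I expect the only delicate point to be the successful-iteration bookkeeping: the increment of $\Sigma_k^2$ is \emph{positive} there, and the whole argument hinges on recognizing that this growth is cancelled by the algorithmic margin $-\tfrac{\kappa}{2}\Sigma_k^2$ from the acceptance test. This is exactly why no residual $(1-\nu)$-term survives in the final bound and why the $\nu$-calibration assumption is invoked.
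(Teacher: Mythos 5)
Your proof is correct and follows essentially the same route as the paper's: the same success/failure case split, the same telescoping of $f(X_{k+1})-f(X_k)$ through $F_k^0,F_k^1$ combined with the acceptance test $F_k^1 \le F_k^0 - \tfrac{\kappa}{2}\Sigma_k^2$, the same use of the $\nu$-calibration to absorb the $(1-\nu)(\gamma^2-1)\Sigma_k^2$ growth, and the same application of Lemma~\ref{lm:1} to the two error terms on $\overline{T^f_k}$. The only (cosmetic) difference is that you establish the bound pathwise before taking conditional expectations, whereas the paper takes expectations first; this is arguably a cleaner bookkeeping of the same argument.
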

\begin{proof}
Conditioned by the event $\mathcal{F}_{k-1}$, assuming that the objective function estimates $ \{F_k^0, F_k^1\}_k$ are inaccurate. One has, if the iteration $k$ is successful,  $X_{k+1} = X_k + \Sigma_k D_k$, where  $\Sigma_{k+1} \leq \gamma \Sigma_k$. Hence, using Lemma \ref{lm:1}, we get
		\begin{eqnarray*} 
		\mathbb{E}\left(\mathbf{1}_{\overline{T^f_k}}(\Phi_{k+1} - \Phi_k)  | \mathcal{F}_{k-1}\right)& \le & \nu \mathbb{E}\left( \mathbf{1}_{\overline{T^f_k}}\left(f(X_{k+1}) - f(X_k)\right) | \mathcal{F}_{k-1}\ \right) + (1 - \nu)(1-p_k^*)(\gamma^2 - 1)\Sigma_k^2 \\
		& \le & \nu \left(  2 (1-p_k^*)^{1/2} \varepsilon_v - \frac{c(1-p_k^*)}{2} \right)\Sigma_k^2 + (1 - \nu)(1-p_k^*)(\gamma^2 - 1)\Sigma_k^2 
	\end{eqnarray*}
	Assuming that  $ \frac{\nu}{1-\nu} \ge \frac{4(\gamma^2 - 1)}{c}$,  one gets $-\nu (1-p_k^*) \frac{c}{2} + (1 - \nu)(1-p_k^*)(\gamma^2 - 1) \le 0$, hence
	\begin{eqnarray*} 
	\mathbb{E}\left(\mathbf{1}_{\overline{T^f_k}}(\Phi_{k+1} - \Phi_k)  | \mathcal{F}_{k-1}\right)	& \le & 2 \nu (1-p_k^*)^{1/2} \varepsilon_v \Sigma_k^2.
	\end{eqnarray*}
If the iteration is unsuccessful $X_{k+1} = X_k$ and $\Sigma_{k+1} = \gamma^{-1} \Sigma_k$,
 this leads to 
	\begin{equation*}
	\mathbb{E}\left(\mathbf{1}_{\overline{T^f_k}}(\Phi_{k+1} - \Phi_k) | \mathcal{F}_{k-1}\right) = (1-p_k) (1 - \nu)(\gamma^{-2}  - 1)\Sigma_k^2 \le  2 \nu (1-p_k^*)^{1/2} \varepsilon_v \Sigma_k^2.
	\end{equation*}
	Hence, in both cases, one has
	\begin{equation*} \label{eq:DeltaPhiUnsuccessful_2}
		\mathbb{E}\left(\mathbf{1}_{\overline{T^f_k}}(\Phi_{k+1} - \Phi_k)  | \mathcal{F}_{k-1}\right)	 = (1 - \nu)(\gamma^{-2}  - 1)(1-p_k^*) \Sigma_k^2 <  2 \nu (1-p_k^*)^{1/2} \varepsilon_v \Sigma_k^2.
	\end{equation*}
\end{proof}

\DecreaseOnPhi*
\begin{proof}
The  proof is inspired from what is done in \cite{EBergou_YDiouane_VKungurtsev_CWRoyer_2018a,ChenMenickellyScheinberg17,BlanchetCartisMenickellyScheinberg19,Sto_Mads_2019}. 

Putting the results of the two Lemmas \ref{lm:2} and \ref{lm:3} together, we obtain the following
	\begin{equation*} 
		\mathbb{E}\left(\Phi_{k+1} - \Phi_k | \mathcal{F}_{k-1}\right)\leq - (1 - \nu)p_k^*(1- \gamma^{-2})\Sigma_k^2 +  2 \nu (1-p_k^*)^{1/2} \varepsilon_v \Sigma_k^2
	\end{equation*}
	Hence, assuming that $\frac{p_k^*}{(1-p_k^*)^{1/2}} \ge  \frac{p}{(1-p)^{1/2}} \ge \frac{4\nu \varepsilon_v}{(1-\nu)(1-\gamma^{-2})} $, it reduces to 
			\begin{equation*} 
		\mathbb{E}\left(\Phi_{k+1} - \Phi_k | \mathcal{F}_{k-1}\right)\leq -  \frac{1}{2}(1 - \nu) p_k^* (1- \gamma^{-2}) \Sigma_k^2\le - \alpha \Sigma_k^2
	\end{equation*}
	where $\alpha =    \frac{1}{2}(1 - \nu)p(1- \gamma^{-2})\Sigma_k^2 $.

\end{proof}

\StepSizeToZeroAlmostSurely*
\begin{proof}
Indeed, by taking expectation on the result from Theorem \ref{thm:DecreaseOnPhi}, one gets 
\begin{equation*} 
	\mathbb{E}\left(\Phi_{n+1} - \Phi_0 \right) = \sum^{n}_{k=0}  \mathbb{E}\left(\Phi_{k+1} - \Phi_k \right) =  \sum^{n}_{k=0} \mathbb{E}\left( \mathbb{E}\left(\Phi_{k+1} - \Phi_k | \mathcal{F}_{k-1}\right) \right)  \le - \alpha  \mathbb{E}\left(  \sum^{n}_{k=0}  \Sigma_k^2 \right).
\end{equation*}
Since $\Phi_{n+1} \geq \nu f_{\mathrm{low}}$, one deduces that by taking $n \to \infty$
\begin{equation*}
	 \mathbb{E}\left(  \sum^{+\infty}_{k=0}  \Sigma_k^2 \right)  < \infty.
\end{equation*}
Thus, we conclude that the probability of the random variable $\Sigma_k$ to converge to zero is one.	
Moreover, assuming the boundedness  of the  sequence $\{X_k\}$, one deduces the existence of random vector $X_*$ and a subsequence $\mathcal{K}\subset \mathbb{N}$ such that $\{\Sigma_k\}_{k \in \mathcal{K}}$ goes to zero almost surely and  $\{X_k\}_{k\in  \mathcal{K}}$ converges almost surely to $X_*$.
\end{proof}

\subsection{Optimality condition for constrained problems}
\label{sec:optimality_condition}

We now turn to deriving a main global convergence result.

\paragraph{Review of required definitions}

In what comes next, we introduce the formal definition of a hypertangent cone which will be required to state our main convergence theorem. We will denote by $B(x;\epsilon)$ the closed ball formed by all points at a distance of no more than~$\epsilon$ to~$x$.
\begin{definition}
A vector $d\in \R^n$ is said to be a hypertangent vector to the set $\Omega \subseteq \R^n$
at the point $x$ in $\Omega$ if there exists a scalar $\epsilon>0$ such that
\begin{align*}
&y+tw\in\Omega,\quad\forall y\in\Omega\cap B(x;\epsilon),\quad w\in B(d;\epsilon), \\
&\qquad\text{and}\quad 0<t<\epsilon.
\end{align*}
\end{definition}

The hypertangent cone to $ \Omega$ at $x$, denoted by~$T_{\Omega}^{\Hc}(x)$, is the set of all hypertangent vectors
to $\Omega$ at $x$.
Then, the Clarke tangent cone to $\Omega$ at $x$ (denoted by~$T_{\Omega}^{\cl}(x)$)
can be defined as the closure of the hypertangent cone~$T_{\Omega}^{\Hc}(x)$.

\begin{definition}
A vector $d\in \R^n$ is said to be a Clarke tangent vector to the set $\Omega \subseteq \R^n$
at the point $x$ in the closure of $\Omega$ if for every sequence $\{x_k\}$ of elements of
$\Omega$ that converges to $x$ and for every sequence of positive real numbers $\{t_k\}$
converging to zero, there exists a sequence of vectors $\{d_k\}$ converging to $d$ such
that $x_k+t_kd_k\in\Omega$.
\end{definition}

\paragraph{Auxiliary result}
We state an auxiliary result from the literature that will be useful for the analysis (see Theorem 5.3.1~\cite{RDurret_book_2010} and Exercise 5.3.1~\cite{RDurret_book_2010}).
\begin{lemma}
\label{supermartingal:lm}
Assume that, for all $k$, $G_k$ is a supermartingale with respect to $\mathcal{F}_{k-1}$ (a $\sigma$-algebra generated by $G_0,\ldots, G_{k-1}$. Assume further that there exists $M>0$ such that $|G_{k} -G_{k-1}| \le M <\infty$, for all $k$. Consider the random events $C=\{\lim_{k\to \infty} G_k ~\mbox{exists and is finite}\}$ and $D=\{\lim \sup_{k\to \infty} G_k = \infty\}$. Then $\mathbf{P}(C\cap D)=1$.
\end{lemma}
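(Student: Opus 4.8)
The plan is to reconstruct the bounded-increment dichotomy quoted from \cite{RDurret_book_2010} and, crucially, to extract the $\{\limsup_k G_k = +\infty\}$ branch: the goal is to show that almost surely either $G_k$ converges to a finite limit (the event $C$) or $\limsup_k G_k = +\infty$ (the event $D$), so that the dichotomy $\mathbf{P}(C \cup D)=1$ recorded by the lemma holds. The whole statement reduces to a single inclusion, namely that the path of $G_k$ cannot remain bounded above without converging, i.e. $\{\sup_k G_k < \infty\} \subseteq C$ up to a $\mathbf{P}$-null set. Taking complements then gives $C^c \subseteq \{\sup_k G_k = +\infty\} = \{\limsup_k G_k = +\infty\} = D$, which is exactly the claim. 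I stress that the divergence branch isolated here is genuinely of sign $+\infty$, and it is obtained by localizing from above (stopping when the process gets large), not from below.

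First I would localize from above. Fixing an integer $K$, set $T_K = \inf\{k : G_k > K\}$, a stopping time for the filtration $\{\mathcal{F}_{k-1}\}$, and consider the stopped process $G_{k \wedge T_K}$. The bounded-increment hypothesis $|G_k - G_{k-1}| \le M$ is exactly what controls the overshoot at the passage time: since $G_{T_K - 1} \le K$, one gets the uniform bound $G_{k \wedge T_K} \le K + M$ for all $k$. Consequently the positive part of the stopped process is uniformly bounded, $\sup_k \mathbb{E}\big[(G_{k \wedge T_K})^+\big] \le K + M < \infty$, which is precisely the hypothesis under which the convergence theorem of \cite{RDurret_book_2010} forces $G_{k \wedge T_K}$ to converge almost surely to a finite limit.

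Next I would read this convergence on the event $\{T_K = \infty\} = \{\sup_k G_k \le K\}$, where the stopped and original processes coincide, so that $G_k$ itself converges to a finite limit there; hence $\{\sup_k G_k \le K\} \subseteq C$ up to a null set. Writing $\{\sup_k G_k < \infty\}$ as the countable union $\bigcup_{K \in \mathbb{N}} \{\sup_k G_k \le K\}$ and discarding the countably many exceptional null sets yields $\{\sup_k G_k < \infty\} \subseteq C$ almost surely, which is the inclusion isolated above and therefore completes the proof that $\mathbf{P}(C \cup D) = 1$.

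The step I expect to be the main obstacle is the upward localization itself: verifying that the bounded-above stopped process satisfies the hypotheses of the cited convergence theorem. Here both structural ingredients are indispensable — the bounded increments, which upgrade the soft pre-stopping bound $G_k \le K$ to the hard uniform bound $K + M$ needed for $\sup_k \mathbb{E}[(G_{k\wedge T_K})^+] < \infty$, and the martingale structure of $G_{k \wedge T_K}$, which is what the convergence theorem consumes (for a genuine martingale $G$, the stopped process is again a martingale, so its positive part being uniformly bounded suffices to conclude). A secondary technical point is the measurability bookkeeping: checking that $T_K$ is a stopping time, that stopping preserves the relevant martingale property, and that the passage from ``bounded above by each fixed $K$'' to ``bounded above'' is legitimate through the countable union over levels.
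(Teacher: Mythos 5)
Your proposal is correct in substance, and it is essentially the paper's own route: the paper gives no in-text proof of Lemma~\ref{supermartingal:lm}, deferring to Theorem~5.3.1 and Exercise~5.3.1 of \cite{RDurret_book_2010}, and Durrett's proof there is exactly your upward localization (stop at $T_K=\inf\{k : G_k>K\}$, use the bounded increments to control the overshoot, invoke the convergence theorem for the stopped process, take a countable union over levels $K$). You also silently repaired one defect of the printed statement: since $C$ and $D$ are disjoint, $\mathbf{P}(C\cap D)=1$ is vacuously false as written and must be read as $\mathbf{P}(C\cup D)=1$, which is what you prove.

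One point you left hanging in your final paragraph should be made explicit, because taken at face value your key step is applied to the wrong process class. The lemma says \emph{supermartingale}, but the convergence theorem you invoke --- $\sup_k \mathbb{E}\left[(G_{k\wedge T_K})^+\right]<\infty$ implies almost sure convergence --- is the \emph{submartingale} convergence theorem; for a supermartingale, boundedness above gives no control (one needs $\sup_k \mathbb{E}\left[G_k^-\right]<\infty$), and indeed the dichotomy itself is false for supermartingales: take $G_k=-|S_k|$ with $S_k$ a simple symmetric random walk, a supermartingale with increments bounded by $1$, bounded above by $0$; by recurrence $\limsup_k G_k=0$ and $\liminf_k G_k=-\infty$, so neither $C$ nor $D$ occurs. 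Thus ``supermartingale'' in the statement is a typo for ``submartingale,'' which is consistent with the lemma's only use in the paper: in the proof of Theorem~\ref{th:12} it is applied to $W_k$ after establishing $\mathbb{E}(W_k\mid\mathcal{F}_{k-1})\ge W_{k-1}$, i.e.\ the submartingale property. With that reading, your hedge (``for a genuine martingale\ldots'') is unnecessary and the argument closes verbatim: a stopped submartingale is again a submartingale, your overshoot estimate gives $G_{k\wedge T_K}\le K+M$ (take $K\ge G_0$, or absorb $\mathbb{E}[G_0^+]$ into the bound, to cover the case $T_K=0$), and the submartingale convergence theorem yields almost sure convergence of the stopped process, after which your reading on $\{T_K=\infty\}=\{\sup_k G_k\le K\}$ and the countable union over $K\in\mathbb{N}$ complete the proof.
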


\paragraph{Main convergence result}

\begin{theorem}[Formal version of Theorem~\ref{th:1}]
\label{th:12}
Assume Assumption \ref{ass:AccEstimates:cons} and the working assumptions of Corollary \ref{corol:StepSizeToZeroAlmostSurely} hold.
Then, there exists an almost surely event $A$ such that for all $\omega \in A$, $X_*(\omega) \in \Omega $ is a limit point of the subsequence of iterates $\{ X_k(\omega) \}_K$ and $\lim_{k \in K} \Sigma_k(\omega) = 0$.
In this case, if $d \in T_{\Omega}^{\cl}(X_*(\omega))$ is a limit point associated with $\{ D_k(\omega) \}_K$, then $\nabla f\left(X_*(\omega)\right)^{\top}d\ge 0.$
\end{theorem}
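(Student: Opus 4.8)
The plan is to convert the almost-sure step-size decay of Corollary~\ref{corol:StepSizeToZeroAlmostSurely} into a pathwise first-order condition by analysing \emph{unsuccessful} iterations along a carefully chosen refining subsequence. First I would fix an almost sure event $A$ on which simultaneously: (i) $\Sigma_k\to0$ and $\{X_k\}$ is bounded (Corollary~\ref{corol:StepSizeToZeroAlmostSurely}); (ii) the constraint estimates $(C^0_k,C^1_k)$ are $\varepsilon_c$-accurate for \emph{every} $k$ (Assumption~\ref{ass:AccEstimates:cons}, which holds with probability one); and (iii) the objective estimates are $\varepsilon_f$-accurate infinitely often, which follows from the conditional (Lévy) Borel--Cantelli lemma since $\mathbb{P}(T^f_k\mid\mathcal F_{k-1})\ge p>0$ forces $\sum_k\mathbb{P}(T^f_k\mid\mathcal F_{k-1})=\infty$. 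On $A$ the feasibility of any limit point is immediate: every accepted iterate satisfies the relaxed estimated constraint $C^1_k-\varepsilon_c\Sigma_k\le0$, so by (ii) $c(X_{k+1})\le2\varepsilon_c\Sigma_k$, and since $\Sigma_k\to0$ continuity of $c$ gives $X_*\in\Omega$ for any limit point $X_*$ of $\{X_k\}$.

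Next I would isolate a refining subsequence of iterations that are simultaneously unsuccessful \emph{and} accurate. Because $\gamma>1$ and $\Sigma_k\to0$, the iterations that attain a new record-low step size are unsuccessful and form an infinite set along which $\Sigma_k\to0$; this is the deterministic refining-subsequence construction. The delicate point is to guarantee that infinitely many of these refining iterations also lie in the \emph{random} set of accurate iterations. This is exactly where I would invoke Lemma~\ref{supermartingal:lm}: building the bounded-increment supermartingale $W_k=\sum_{j<k}\big(p-\mathbf 1_{T^f_j}\big)$ (a supermartingale since $\mathbb{E}[\mathbf 1_{T^f_j}\mid\mathcal F_{j-1}]\ge p$) and using the lemma's dichotomy together with $p>\tfrac12$ to exclude the scenario in which only finitely many refining iterations are accurate. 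Passing to a further subsequence via boundedness of $\{D_k\}$ ($\|D_k\|\le d_{\max}$) and of $\{X_k\}$, I obtain a subsequence $K$ of unsuccessful, accurate iterations with $\Sigma_k\to0$, $X_k\to X_*$ and $D_k\to d$.

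With $K$ in hand the argument becomes essentially deterministic. Since $d$ is hypertangent to $\Omega$ at $X_*$ (taking the closure of the hypertangent cone then yields the Clarke-cone statement), for large $k\in K$ the true trial point $X_k+\Sigma_kD_k$ lies in $\Omega$; together with the relaxation by $\varepsilon_c\Sigma_k$ and constraint accuracy (ii) this makes the barrier value $\tilde F^{\trial}_{k+1}$ finite and equal to $F^1_k$. Because the iteration is unsuccessful, the sufficient-decrease test fails, $F^1_k>F^0_k-\tfrac\kappa2\Sigma_k^2$, and inserting the $\varepsilon_f$-accuracy bounds $|F^0_k-f(X_k)|\le\varepsilon_f\Sigma_k^2$ and $|F^1_k-f(X_k+\Sigma_kD_k)|\le\varepsilon_f\Sigma_k^2$ gives
\begin{equation*}
f(X_k+\Sigma_kD_k)-f(X_k)\;>\;-\Big(2\varepsilon_f+\tfrac\kappa2\Big)\Sigma_k^2 .
\end{equation*}
A first-order Taylor expansion with the Lipschitz gradient (Assumption~\ref{ass:f}) bounds the left-hand side above by $\Sigma_k\nabla f(X_k)^\top D_k+\tfrac{L}{2}\Sigma_k^2\|D_k\|^2$; dividing by $\Sigma_k>0$ and using $\|D_k\|\le d_{\max}$ yields
\begin{equation*}
\nabla f(X_k)^\top D_k\;>\;-\Big(2\varepsilon_f+\tfrac\kappa2+\tfrac{L}{2}d_{\max}^2\Big)\Sigma_k .
\end{equation*}
Letting $k\to\infty$ along $K$, with $\Sigma_k\to0$, $X_k\to X_*$, $D_k\to d$ and continuity of $\nabla f$, produces $\nabla f(X_*)^\top d\ge0$, which is the asserted optimality condition.

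I expect the main obstacle to be the second paragraph: reconciling the randomness of the refining subsequence with the accuracy events, i.e.\ certifying on an almost sure event that unsuccessful and accurate iterations occur \emph{together} infinitely often. The bound $p>\tfrac12$ and the supermartingale dichotomy of Lemma~\ref{supermartingal:lm} are the tools that make this possible, but the precise construction of the supermartingale and the verification that its convergent branch is incompatible with ``only finitely many accurate refining iterations'' is where the care is needed. A secondary subtlety is that accepted iterates are feasible only up to the $\varepsilon_c\Sigma_k$ relaxation, so the hypertangency argument certifying finiteness of the barrier must be carried out in the limit $\Sigma_k\to0$ rather than at exactly feasible points.
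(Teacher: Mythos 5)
Your route is genuinely different from the paper's. The paper fixes the subsequence $\mathcal{K}$ given by Corollary~\ref{corol:StepSizeToZeroAlmostSurely}, assumes for contradiction that $\nabla f(x_*)^{\top}d \le -2\epsilon$, shows that along $\mathcal{K}$ an accurate iteration would then have to be \emph{successful} (a step-size increase), and derives a contradiction by coupling the process $B_k=\log_\gamma(\Sigma_k/b_\epsilon)$ to the random walk $W_k=\sum_i\bigl(2\mathbf{1}_{T^f_i}-1\bigr)$ via Lemma~\ref{supermartingal:lm}. You instead try to construct, directly, a subsequence of iterations that are simultaneously \emph{unsuccessful} and accurate, and conclude by Taylor expansion. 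Your third paragraph (barrier finiteness via hypertangency plus almost-sure constraint accuracy, failure of the sufficient-decrease test, insertion of the $\varepsilon_f$-accuracy bounds, Lipschitz Taylor bound, division by $\Sigma_k$ and passage to the limit) is correct and essentially coincides with the deterministic computation inside the paper's proof; the feasibility argument for the limit point is also at the same level of rigor as the paper's.

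The genuine gap is in your second paragraph, at exactly the point you flag as delicate. The supermartingale $W_k=\sum_{j<k}\bigl(p-\mathbf{1}_{T^f_j}\bigr)$ records only how often the estimates are accurate; it carries no information about which iterations are unsuccessful, so no dichotomy applied to it alone can exclude the scenario in which, beyond some $k_0$, accuracy and unsuccessfulness never coincide. That scenario is not far-fetched: accuracy and success are strongly correlated (accurate estimates at a point admitting descent produce successful iterations), which is the whole obstruction. The missing idea is the coupling of the accuracy indicators to the \emph{step-size dynamics}: on the event that only finitely many iterations are both accurate and unsuccessful, every accurate iteration beyond some $k_0$ multiplies $\Sigma$ by $\gamma$, while no iteration ever divides it by more than $\gamma$, hence for all $k>k_0$
\begin{equation*}
\log_\gamma \Sigma_k - \log_\gamma \Sigma_{k_0} \;\ge\; \sum_{j=k_0}^{k-1}\bigl(2\,\mathbf{1}_{T^f_j}-1\bigr).
\end{equation*}
The right-hand side is a submartingale with increments $\pm 1$ (this is where $p>\tfrac12$ is actually used, not in your frequency statement), so by Lemma~\ref{supermartingal:lm} its $\limsup$ is $+\infty$ almost surely; thus $\limsup_k \Sigma_k=+\infty$ on that event, contradicting Corollary~\ref{corol:StepSizeToZeroAlmostSurely}. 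This coupling is precisely the role played by $B_k$ in the paper's proof, and without it your key lemma is unproven. Once it is supplied, your plan does go through and yields the theorem in the form stated (existence of a subsequence whose limit point and limit directions satisfy the optimality condition); note also that your restriction to record-low ``refining'' iterations is unnecessary, since $\Sigma_k\to 0$ along the whole sequence, so any infinite set of accurate unsuccessful iterations serves after extracting convergent sub-subsequences of $X_k$ and $D_k$.
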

\begin{proof}
From Corollary \ref{corol:StepSizeToZeroAlmostSurely} and Assumption \ref{ass:AccEstimates:cons}, it follows that the event $$A=\left\{\omega \in \mathcal{E}: \exists \mathcal{K}\subset \mathbb{N}~\mbox{such that $\{\Sigma_k(\omega)\}_{k \in \mathcal{K}} \to 0$ and  $\{X_k(\omega)\}_{k\in  \mathcal{K}} \to X_*(\omega)$} \right\} \cap \left \{\cap_{k=0}^{\infty} T^c_k \right\}$$
happens  almost surely. Now, consider $\omega \in A$ and let $x_*= X_*(\omega) \in \Omega$, $x_k= X_k(\omega) $, $\sigma_k= \Sigma_k(\omega) $ and $d_k= D_k(\omega)$. 
Let  $d \in T_{\Omega}^{\cl}(X_*(\omega))$ be a limit point associated with $\{ d_k \}_K$. Then, conditioned by the event $A$, one has for $k \in \mathcal{K}$ sufficiently large $x_k+\sigma_k d_k \in \Omega$.

Let $W_k=\sum^{k}_{i=1} \left(2\mathbf{1}_{T^f_i} -1\right)$ where $T^f_i$ is given in Definition \ref{defi:probaccfun}, recall that by Assumption \ref{ass:ProbAccEstimates:obj}, $p_i^*= \mathbb{P}(T^f_i | \mathcal{F}_{i-1}) \ge \frac{1}{2}$. We start by showing that $\{W_k\}$ is a submartingale:
$$
\mathbb{E}(W_k| \mathcal{F}_{k-1}) = W_{k-1} + 2  \mathbb{P}(T^f_k | \mathcal{F}_{k-1}) -1 \ge W_{k-1}.
$$
Note that $|W_{k+1}- W_k| =1$, hence the event $\{ \lim_{k\to \infty}W_k ~\mbox{exists and is finite} \}$ has a probability zero. Thus by Lemma \ref{supermartingal:lm}, one deduces that $\mathbb{P}(\lim \sup_{k} W_k =\infty)=1$.

Conditioned by the event $A$, suppose that there exists $\epsilon>0$,   $\nabla f\left(x_*\right)^{\top}d\le -2\epsilon$. Hence, there exists $k_1$ such that for $k \in \mathcal{K}$ and $k\ge k_1$, one has $\frac{f(x_k+\sigma_k d_k) - f(x_k)}{\sigma_k} \le -\epsilon$ and $x_k+\sigma_k d_k \in \Omega$. By Corollary \ref{corol:StepSizeToZeroAlmostSurely}, conditioned by A, one has $\sigma_k \to 0$ when $k$ goes to $\infty$. Thus, there exists $k_2$ such that for $k \in \mathcal{K}$ and $k\ge k_2$, one has
$$
\sigma_k \le b_{\epsilon}:=\frac{2 \epsilon}{\kappa+4\varepsilon_f}.
$$
For any $k \in \mathcal{K}$ such that $k \ge k_0=\max\{ k_1,k_2\}$, we note that since $x_k+\sigma_k d_k \in \Omega$ and Assumption \ref{ass:AccEstimates:cons} holds, one deduces that $c_k^0 - \varepsilon_c \sigma_k \le 0$ and $c_k^1 - \varepsilon_c \sigma_k \le 0$, meaning  that $\tilde f^{\trial}_k =f^{1}_k$. Two cases then occur. First if $\mathbf{1}_{T^f_k} =1$, then 
$$
\tilde f^{\trial}_k  - f_{k} \le 2 \varepsilon_f \sigma_k^2  - \epsilon \sigma_k \le - \frac{\kappa}{2} \sigma_k^2,
$$
Hence, the iteration $k$ of Algorithm \ref{alg:GESgc} is successful and the stepsize $\sigma_k$ is updated as $\sigma_{k+1}= \gamma \sigma_k$.

Let now $B_k$  be the random variable whose realization is $b_k = \log_\gamma\left( \frac{\sigma_k}{b_{\epsilon}}\right)$. Clearly, if $\mathbf{1}_{T^f_k} =1$, one has 
$b_{k+1} =b_k +1$. Otherwise, if $\mathbf{1}_{T^f_k} =0$, $b_{k+1}\ge b_{k} -1$ since $\sigma_{k+1} \ge \gamma^{-1} \sigma_k$ always holds.
Hence, $B_{k}- B_{k_0} \ge W_{k} -W_{k_0}$, and from $\mathbb{P}(\lim \sup_{k} W_k =\infty)=1$, one obtains $\mathbb{P}(\lim \sup_{k} B_k =\infty)=1$. This leads to a contradiction with the fact that $B_k<0$ for any $k \in \mathcal{K}$ such that $k \ge k_0$. 
\end{proof}


\newpage
\section{Guided-evolution strategy}
\label{sec:guided_search}

As an efficient implementation of  Algorithm~\ref{alg:GESgc}, we tested the GES approach  introduced in~\cite{maheswaranathan2018guided}. The GES technique defines a search distribution from a subspace spanned by a set of surrogate gradients\footnote{A surrogate gradient is defined as an biased or corrupted gradient, which has correlation with the true gradient.} denoted by $\S_k$. At each iteration $k$, the set $\S_k=[G_{k-1-m}, \ldots, G_{k-1}] \in \real
^{n\times m}$ consists of the last $m$ surrogate gradients computed from iterations $\{k-1-m, \dots k-1 \}$. The set $\S_k$ is used to compute an orthogonal basis $\Um_k$ of the subspace formed by the vectors in $\S_k$. This is done using a QR decomposition as specified in~\cite{maheswaranathan2018guided}.

Samples are then drawn around the mean vector $x_k$ according to the distribution $\N(0, (\sigma_{k}^{\es})^2 C_k)$, where the covariance matrix is given by $C_k = \frac{\alpha}{n} I_{n} + \frac{1 - \alpha}{m} \Um_k \Um_k^\top$,
where $I_{n}$ is the identity matrix and $\alpha \in [0,1]$ is a hyperparameter that trade-offs the influence of the smaller subspace $\R^m$ over the entire space $\R^n$. A small value of $\alpha$ enforces the search to be conducted in the smaller subspace while larger values give the smaller subspace less importance. 
In practice the directions $d^i_k$ can efficiently sampled as follows, 
\begin{equation}
    d^i_k = \sigma_{k}^{\es}\sqrt{\frac{\alpha}{n}}d + \sigma_{k}^{\es}\sqrt{\frac{1 - \alpha}{m}} \Um_k d'
    \label{eq:practiceges}
\end{equation}
where, $d \sim \N(0, I_{n})$, $d' \sim \N(0, I_{m})$ and $\sigma_{k}^{\es}$ is the standard deviation of the distribution from which the direction's are sampled.

The surrogate gradient can be computed in various manners. Update directions in previous iterations can also be used to compute surrogate gradients. We compute the surrogate gradients required to compute $\Um_k$  using an Actor-Critic \citep{sutton2018, mnih2016asynchronous} formulation of policy gradient, where every member of the population $i$ computes an approximate gradient as
\begin{equation}
G_{\pi_{y^i_k}} := A_{\pi_{y^i_k}}(s_{t}, a_{t})\nabla \log \pi_{y^i_k}(a_{t}|s_{t})
\end{equation}

where $\pi_{y^i_k}$ is parameterized by $y^i_k$, $A_{\pi_{y^i_k}}(s_{t}, a_{t}) = R_t - V_{v}(s_{t})$ is the advantage function and, $s_{t}, a_{t}$ are the state and actions sampled at step $t$, $R_t$ is the return from step $t$ to the last time step $T$ and $V_{v}$ is the value function parameterized by parameters $v$. Then, the surrogate gradient is averaged to obtain the surrogate gradient for iteration k
\begin{equation}
G_{k} = \frac{1}{\lambda} \sum_{i=1}^{\lambda} G_{\pi_{y^i_k}}
\end{equation}
where, $\lambda$ is the size of the population. 

Before we start optimizing the policy, we compute $m$ surrogate gradients. We note that the guided-search strategy requires storing $m$ surrogate gradients in memory. If we already have $m$ surrogate gradients, then we discard the oldest surrogate gradient and replace it by the newer one.  Once we have enough surrogate gradients, we sample points around the $x_{k}$ using mirrored sampling \citep{salimans2017evolution}. These sample points are then evaluated on the environment, to compute the return and the entropy. Let $\lambda'=\lambda/2$, for a given $i=1,..., \lambda'$, we compute the estimation of the objective function for each of the sample point $y^i_k:=x_k + d^i_k$ as
\begin{equation*}
f_{k}^i \; = \; - R_{\tau \sim \pi_{y^i_k}}(\tau) - \mu \sum_{c=1}^{r} g_{\tau \sim \pi_{y^i_k}}^{c}(\tau)
\end{equation*}
and its mirror point $y^{i+\lambda'}_k:=x_k - d^i_k$ as
\begin{equation*}
f_{k}^{i+\lambda'} \; = \; - R_{\tau \sim \pi_{y^{i+\lambda'}_k}}(\tau) -  \mu \sum_{c=1}^{r} g_{\tau \sim \pi_{ y^{i+\lambda'}_k } }^{c}(\tau)
\end{equation*}
We then obtain the trial point (Eq.~\ref{eq:trial_point} in Algorithm~\ref{alg:GESgc}) using the following update rule, 
\begin{equation}
    x^{\trial}_{k+1} = x_k - \sigma_k\frac{\beta}{\sigma_{k}^{\es}\lambda} \sum_{i=1}^{\lambda'} \left(f_{k+1}^i -  f_{k+1}^{i+\lambda'}\right) d_{k}^i
    \label{eq:ges_parameter_update}
\end{equation}
where  $d^i_k$ is the the direction used to obtain $f_{k+1}^i$, $\sigma_k$ is the stepsize, and $\beta$ is a hyperparameter used for scaling.

Further, we compute the barrier function at the new trial point $x^{\trial}_{k+1}$ as, in the case of entropy maximization by
\begin{equation*}
\tilde f_{k+1}^{\trial} \; = \; \left\{\begin{array}{ll}-R(\pi_{x^{\trial}_{k+1}}) - \mu H(\pi_{x^{\trial}_{k+1}}) & \textrm{if } ~h_l \leq H(\pi_{x^{\trial}_{k+1}}) \leq h_u \\ +\infty & \textrm{otherwise.}
\end{array}\right.
\end{equation*}
Or, in the case of constrained policy optimization by
\begin{equation*}
\tilde f_{k+1}^{\trial} \; = \; \left\{\begin{array}{ll}-R_{\tau \sim \pi_{x^{\trial}_{k+1}}}(\tau) - \mu   \sum_{c=1}^{r} g^{c}_{\tau \sim \pi_{x^{\trial}_{k+1}}}(\tau)   & \textrm{if } [ g_{\tau \sim \pi_{x^{\trial}_{k+1}}}(\tau) ]_{c}\leq t_{c}, c = 1,\dots,r\\ +\infty & \textrm{otherwise.}
\end{array}\right.
\end{equation*}

For both cases, we accept the trial point (i.e. $x_{k+1}=x^{\trial}_{k+1}$ and $f_{k+1}=\tilde f_{k+1}^{\trial}$) if the following condition is satisfied,
\begin{equation*}
\tilde f_{k+1}^{\trial} \leq f_{k} - \frac{\kappa}{2}\sigma_k^2
\end{equation*}
where $\kappa > 0$ is a hyperparameter. We increase the $\sigma_k$ if the iteration is successful and decrease it if it is unsuccessful. 


\newpage
\section{Additional Experimental Results}
\label{sec:add_exp}

\begin{figure}[htp]

\centering
\includegraphics[width=\textwidth]{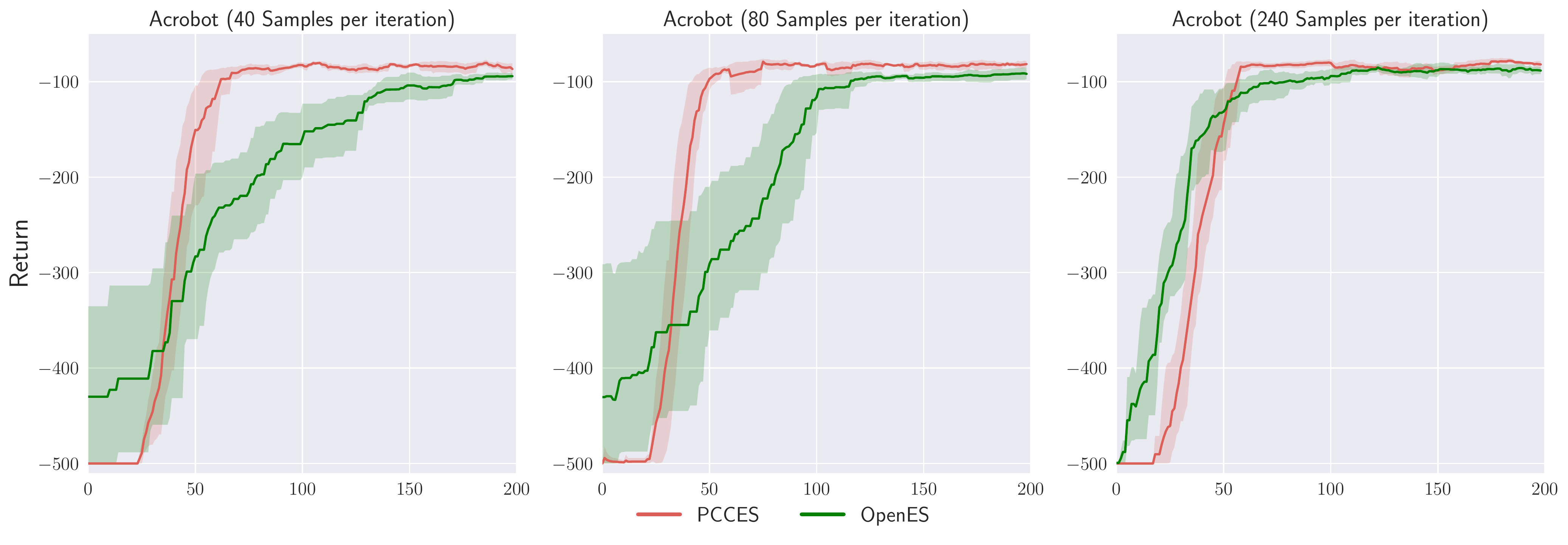}

\caption{Acrobot-v1: PCCES and OpenES runs with 40, 80 and 240 samples per iteration}
\label{fig:acrobot_1}

\end{figure}

\begin{figure}[htp]

\centering
\includegraphics[width=\textwidth]{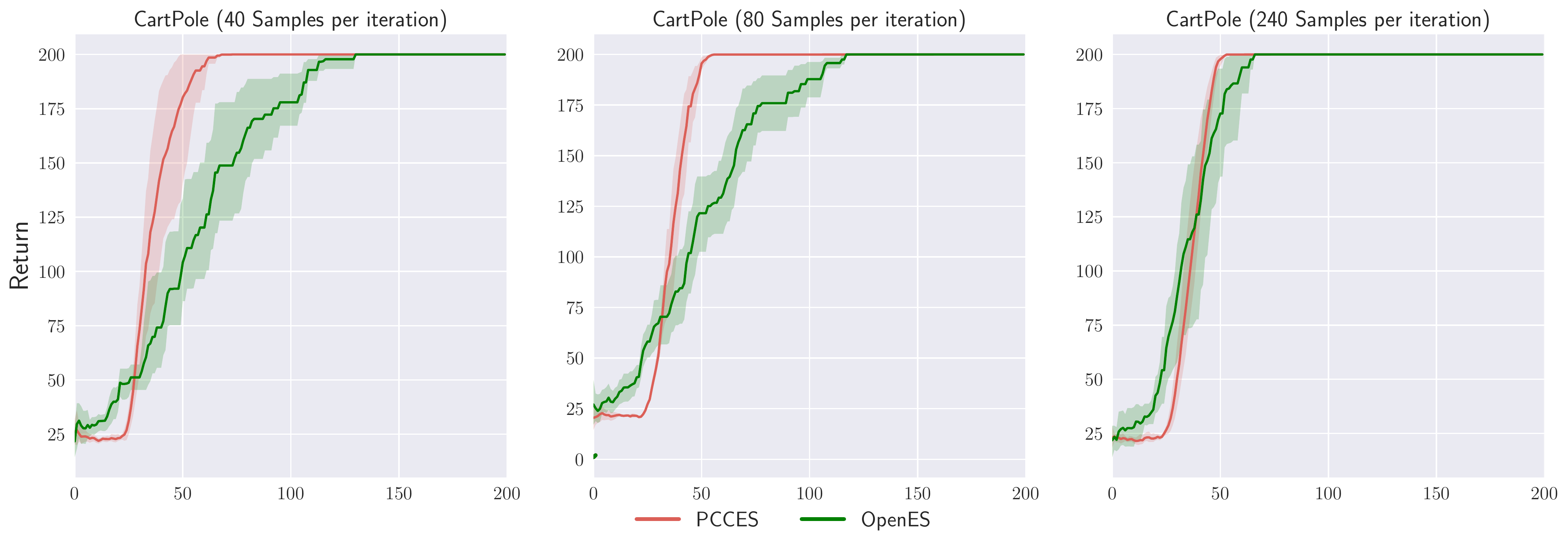}

\caption{CartPole-v0: PCCES and OpenES runs with 40, 80 and 240 samples per iteration}
\label{fig:cartpole_1}
\end{figure}

\begin{figure}[htp]

\centering
\includegraphics[width=\textwidth]{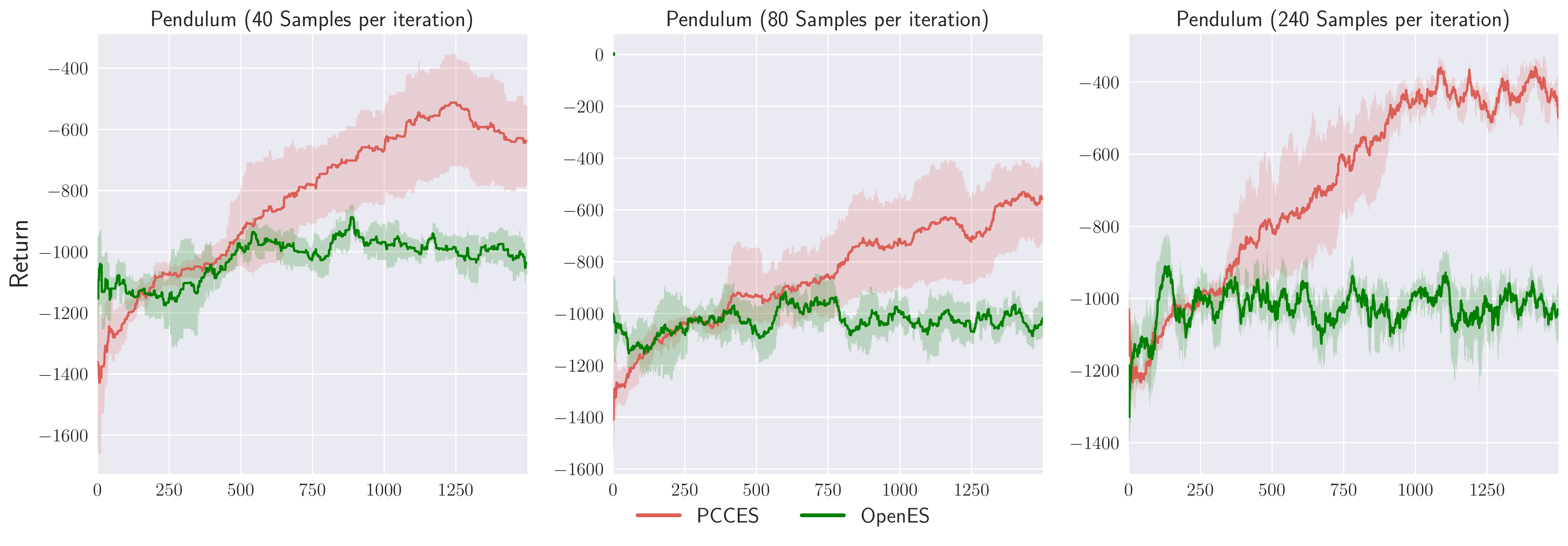}

\caption{Pendulum-v0: PCCES and OpenES runs with 40, 80 and 240 samples per iteration}
\label{fig:pendu_1}

\end{figure}

\begin{figure}[htp]

\centering
\includegraphics[width=\textwidth]{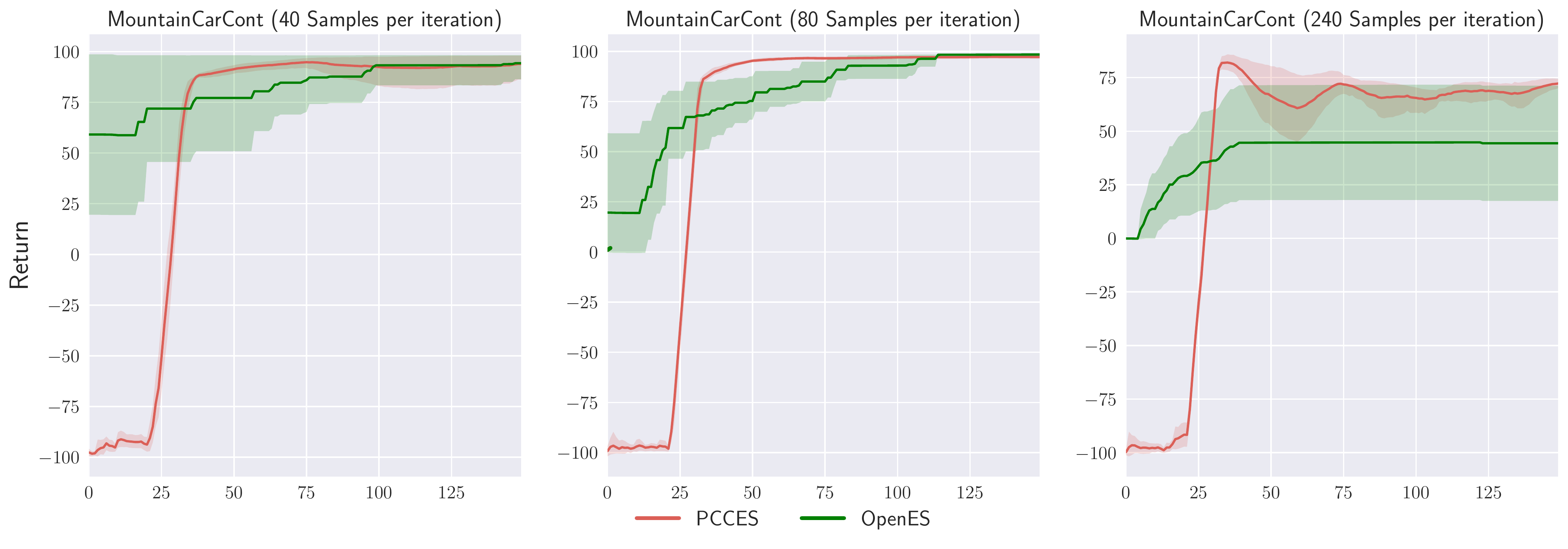}

\caption{MountainCarContinuous-v0: PCCES and OpenES runs with 40, 80 and 240 samples per iteration}
\label{fig:mcc_1}

\end{figure}

\begin{figure}[htp]
\centering
\includegraphics[width=\textwidth]{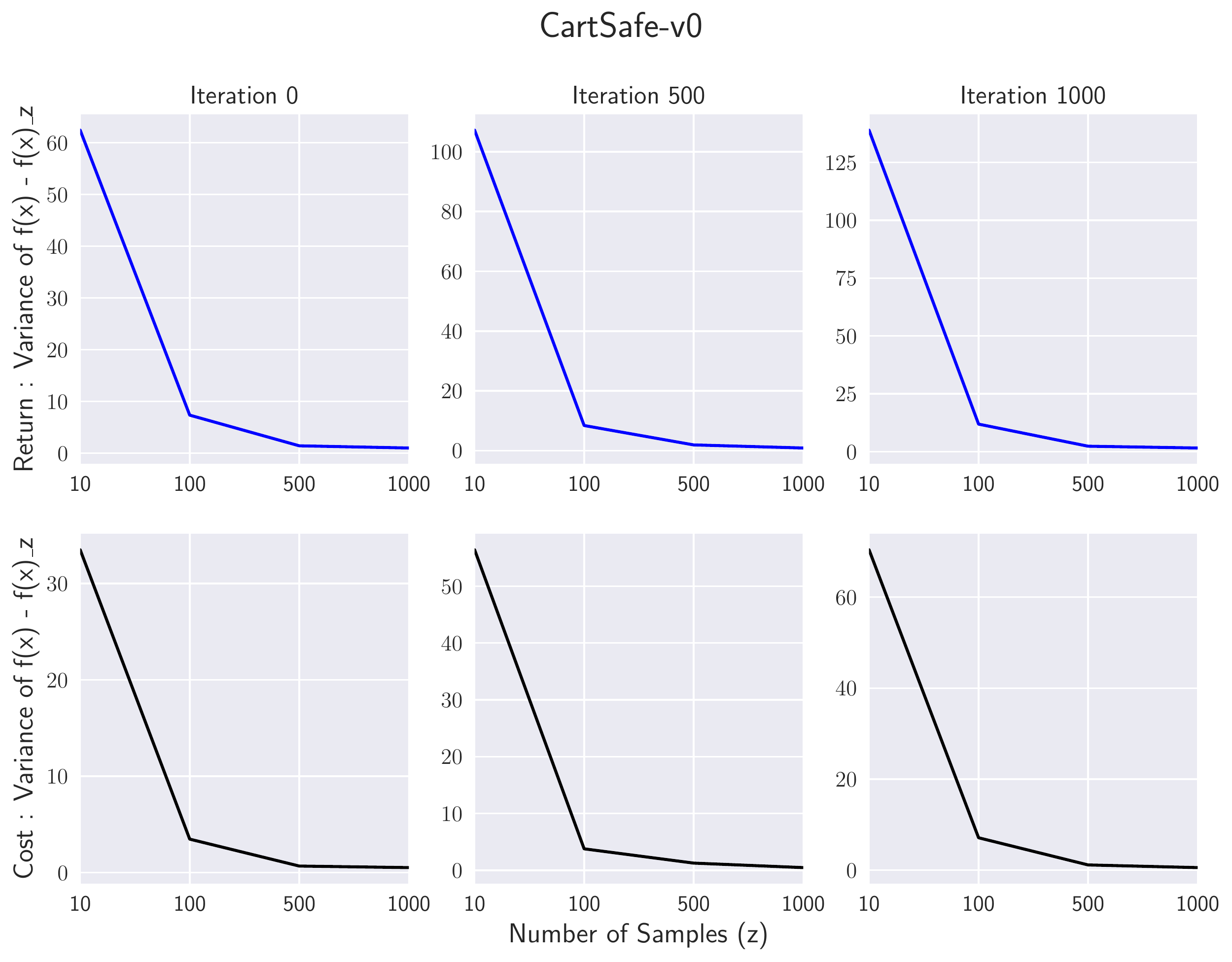}
\caption{Variance of function estimates vs number of samples for CartSafe-v0}
\label{fig:variance_1}

\end{figure}

\begin{figure}[htp]
\centering
\includegraphics[width=\textwidth]{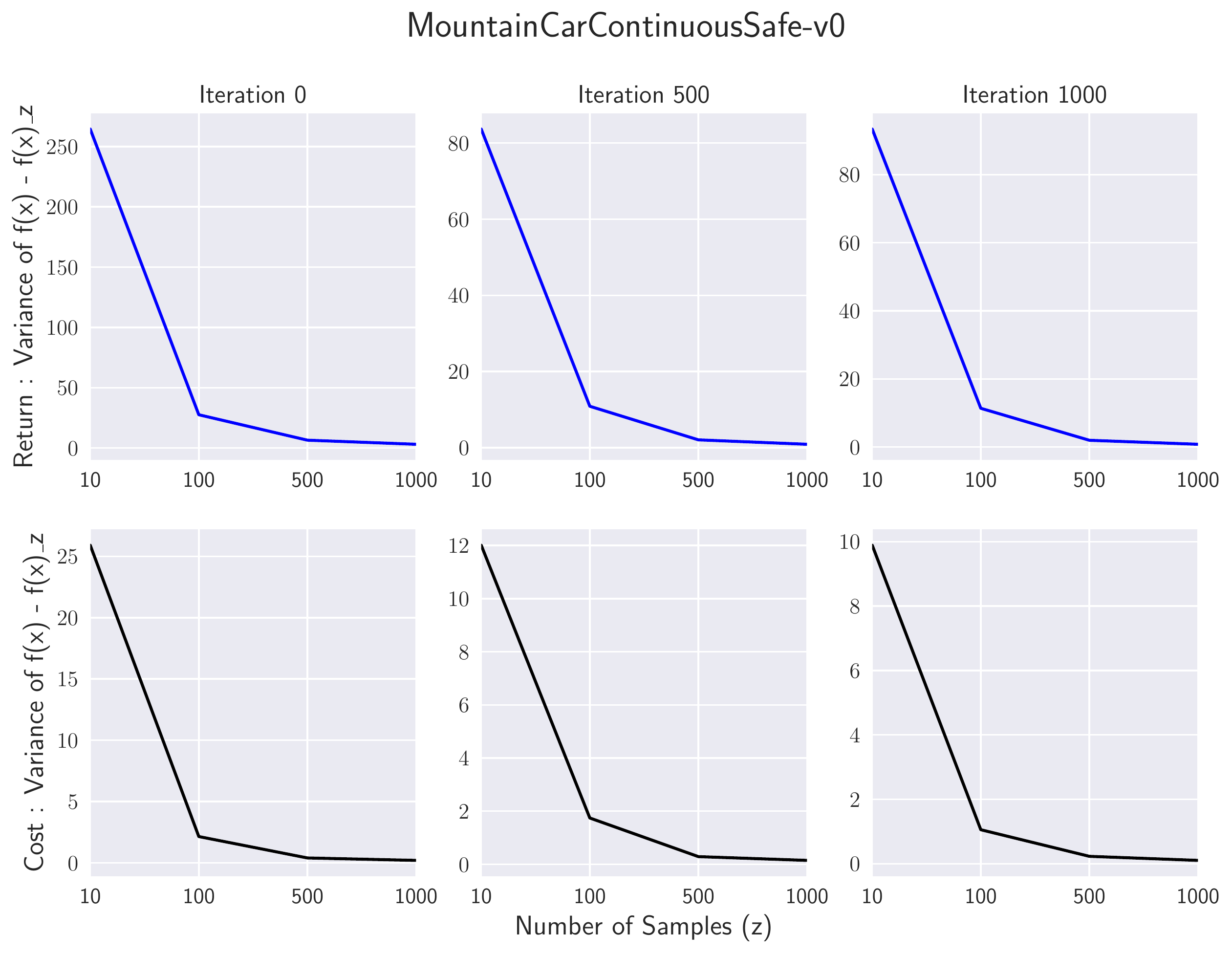}
\caption{Variance of function estimates vs number of samples for MountainCarContinuousSafe-v0}
\label{fig:variance_2}

\end{figure}

\begin{figure}[htp]
\centering
\includegraphics[width=\textwidth]{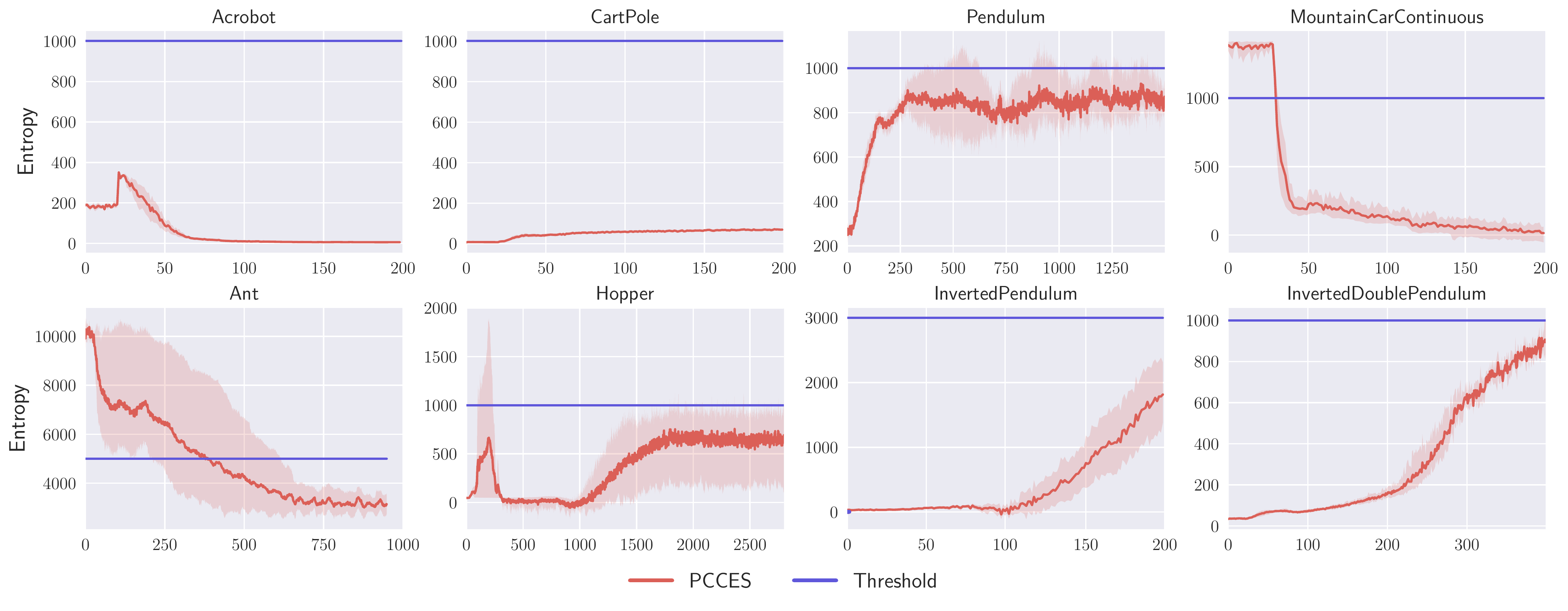}
\caption{Entropy vs upper threshold of the entropy constraint during training for constrained entropy maximization. PCCES consistently returns a solution which satisfies the constraints.}
\label{fig:entropy_const_exp}

\end{figure}

\newpage
 
\subsection{Sensitivity to Threshold}
We intend to include an ablation study for different thresholds. As a preliminary result, the following table reports the performance of PCCES on two environments with different threshold values after 300 updates and averaged over 5 seeds. This shows that the algorithm is not very sensitive of the choice of threshold.
\begin{table}[h]
\centering
\vspace{-3mm}
\begin{tabular}{|c|c|c|l|l|l|l|l|} 
\cline{1-4}\cline{6-8}
\multicolumn{4}{|c|}{MountainCarContinuousSafe-v0}            & \multicolumn{1}{c|}{} & \multicolumn{3}{c|}{CartSafe-v0}                                           \\ 
\cline{1-4}\cline{6-8}
Threshold & 5              & 15              & 20             &                       & 15               & 20                          & 35                \\ 
\cline{1-4}\cline{6-8}
Cost      & $2.9 \pm 1.1$  & $3.8 \pm 1.2$   & $3.1 \pm 2.09$ &                       & $7.8 \pm 6.43$   & $7.1 \pm 7.2$   & $14.1 \pm 13.8$   \\ 
\cline{1-4}\cline{6-8}
Return    & $94.9 \pm 0.4$ & $95.1 \pm 0.46$ & $94.6 \pm 0.4$ &                       & $227.1 \pm 17.9$ & $213.3 \pm 10.3$ & $229.7 \pm 18.5$  \\
\cline{1-4}\cline{6-8}
\end{tabular}
\label{table:threshold}
\end{table}

\subsection{Performance of CPO}
\begin{table}[h!]
\centering
\begin{tabular}{|c|c|c|c|l|l|l|l|l|} 
\cline{1-4}\cline{6-9}
\multicolumn{1}{|l|}{Clip Ratio} & \multicolumn{1}{l|}{0.05} & \multicolumn{1}{l|}{0.1} & \multicolumn{1}{l|}{0.2} &  & Step size & 1e-4           & 1e-5           & 3e-5            \\ 
\cline{1-4}\cline{6-9}
Cost                             & $70.4 \pm 4.2$            & $76.1 \pm 5.3$           & $61.3 \pm 13.1$          &  & Cost      & $75.2 \pm 7.1$ & $83.2 \pm 6.8$ & $68.6 \pm 4.1$  \\ 
\cline{1-4}\cline{6-9}
Return                           & $18.6 \pm 2.6$            & $19.7 \pm 0.8$           & $21.4 \pm 1.2$           &  & Return    & $15.36\pm 3.1$ & $17.2 \pm 0.1$ & $19.2 \pm 1.9$  \\
\cline{1-4}\cline{6-9}
\end{tabular}
\label{table:cpo_clip}
\end{table}

The above table reports the performance of CPO on Safexp-PointButton1-v0 after 300 epochs over 5 seeds for different clip ratios and step sizes. CPO fails to satisfy constraints, which is consistent with results in Safety Benchmarks \citep{Ray2019} (pg.18-19). 
 
 \subsection{Sensitivity to $\mu$}
  We observed changing $\mu$ by small amounts (0.0001 to 0.0002) had no significant effect on results. Though, large changes in $\mu$ (0.0001 to 0.1) could lead to a 
substantial difference in behavior. \\

\begin{figure}[h]
\centering
\includegraphics[width=0.8\textwidth]{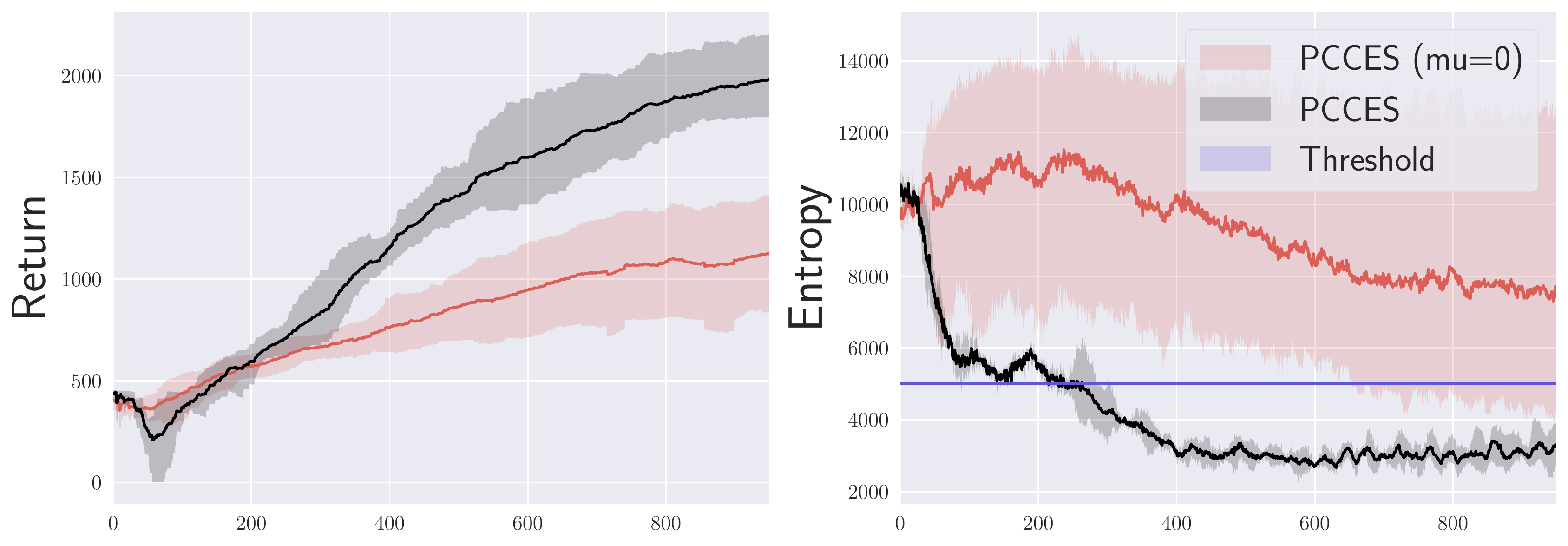}
\caption{\underline{Entropy $\mu=0$:} The above plot shows the performance of PCCES with $\mu=0$ (no constraint on entropy) vs PCCES with constraints on entropy for Ant-v0. These runs were conducted for 1000 updates and averaged over 5 seeds. This result clearly shows the benefit of $\mu > 0$.}
\end{figure}

\newpage
\section{Experiment Details}
\label{sec:exp_details}

\subsection{Hyperparameter Selection and Compute}
For the step size, we conducted a grid search over the following values [0.1, 0.01, 0.001, 0.05]. We conducted 3 runs for each environment with a different seed and selected the step size with the overall best performance. For $\alpha$ and $\beta$ we used the default values mentioned in \cite{maheswaranathan2018guided}. We fixed the initial standard deviation of sampling distribution to 1.0 for all experiments. The remaining hyperparameters are given in table \ref{tab:hyp_1}, \ref{tab:hyp_2} and \ref{tab:hyp_3}.

All experiments for PCCES, OpenES, ASEBO were conducted on servers with only CPU's, with number of CPU's per experiment varied from 5 - 30 depending on the availability of the compute. For CPO, RCPO and PPO, We distributed all runs across 4 CPUs per run and 1 GPU (various GPUs including GTX 1080 Ti, TITAN X, and TITAN V.) 

\begin{table}[h]
\centering
\begin{tabular}{|l|l|l|}
\hline
Hyperparameters          & PCCES       & OpenES      \\ \hline
L-2  coefficient                 & 0.0001      & 0.0001      \\ \hline
$\lambda$   & 40, 80, 240 & 40, 80, 240 \\ \hline
$\sigma_{k}^{\es}$ & 1.0         & 1.0         \\ \hline
$\sigma_{0}$                & 0.1         & 0.1         \\ \hline
decrease $\sigma$ rate          & 0.99        & -           \\ \hline
increase $\sigma$ rate      & 1.01        & -           \\ \hline
min $\sigma$            & 0.001       & -           \\ \hline
max $\sigma$            & 0.1         & -           \\ \hline
$\alpha$                    & 0.5         &     -        \\ \hline
$\beta$                     & 5.0         &     -       \\ \hline
discount factor                    & 0.99        &     -       \\ \hline
$\mu$            & 0.0001      &        -    \\ \hline
$m$               & 20          &       -      \\ \hline
$\kappa$             & 0.005         &       -        \\ \hline
\end{tabular}
\caption{Hyper parameters for PCCES and OpenES for control tasks}
\label{tab:hyp_1}
\end{table}

\begin{table}[h]
\centering
\begin{tabular}{|l|l|l|}
\hline
Environment               & Entropy Low & Entropy High \\ \hline
CartPole               & 0           & 1000         \\ \hline
Acrobot                & 0           & 1000         \\ \hline
MountainCarContinuous  & -1000       & 1000         \\ \hline
MountainCar             & 0           & 1000          \\ \hline
Pendulum               & 0           & 1000         \\ \hline
InvertedPendulum       & -1000       & 3000         \\ \hline
InvertedDoublePendulum & 0           & 1000          \\ \hline
Ant                    & 0           & 5000          \\ \hline
Hopper                  & 0           & 1000          \\ \hline
Walker                  & 0           & 2000          \\ \hline
CartPoleSafeDelayed-v0    & 0           & 2000         \\ \hline
MountainCarSafeDelayed-v0 & 0           & 2000         \\ \hline
Safexp-PointGoal1-v0      & -1000        & 5000            \\ \hline
Safexp-PointButton1-v0    & -1000        & 5000            \\ \hline
Safexp-CarGoal1-v0    & -1000        & 5000            \\ \hline
\end{tabular}
\caption{Entropy lower and upper bounds for all the envs}
\label{tab:hyp_2}
\end{table}

\begin{table}[h]
\centering
\begin{tabular}{|l|l|}
\hline
Environment               & Threshold \\ \hline
CartPoleSafeDelayed-v0    & 30        \\ \hline
MountainCarSafeDelayed-v0 & 10        \\ \hline
Safexp-PointGoal1-v0      & 25        \\ \hline
Safexp-PointButton1-v0    & 25        \\ \hline
Safexp-CarGoal1-v0    & 25        \\ \hline
\end{tabular}
\caption{Thresholds for the cost penalty in the constrained environments}
\label{tab:hyp_3}
\end{table}

\subsection{Software Libraries}
We thank the developers of Tensorflow \citep{tensorflow2015-whitepaper}, PyTorch \citep{pytorch2019}, OpenAI Gym \citep{brockman2016openai}, Numpy \citep{harris2020array}, RLLib \citep{liang2018rllib} and Matplotlib \citep{Hunter:2007}.

\end{document}